\definecolor{indigo}{rgb}{0.0, 0.25, 0.42}
\definecolor{darkorange}{rgb}{1.0, 0.55, 0.0}
\definecolor{darkblue}{rgb}{0.122, 0.435, 0.698}
\newtheorem{corollary}{Corollary}
\newtheorem{definition}{Definition}
\newtheorem{remark}{Remark}
\newtheorem{theorem}{Theorem}
\newtheorem{lemma}{Lemma}
\newtheorem{assumption}{Assumption}
\begin{document}

\title{GP-FL: Model-Based Hessian Estimation for Second-Order Over-the-Air Federated Learning}

\author{
Authors
\thanks{
The results of this study have been submitted in part to 2025 IEEE International Conference on Acoustics, Speech, and Signal Processing (ICASSP).
}
}
\author{Shayan Mohajer Hamidi,~\IEEEmembership{Student Member,~IEEE}, Ali Bereyhi,~\IEEEmembership{Member,~IEEE}, Saba Asaad, ~\IEEEmembership{Member,~IEEE},\\ and H. Vincent Poor, ~\IEEEmembership{Life Fellow,~IEEE}.
\thanks{
Shayan Mohajer Hamidi is with the Department of Electrical and Computer Engineering, University of
Waterloo, Waterloo, ON N2L 3G1, Canada (email: smohajer@uwaterloo.ca).
\\
Ali Bereyhi is with the Department of Electrical
and Computer Engineering (ECE), University of Toronto, Toronto, M5S 2E4,
Canada (e-mail: ali.bereyhi@utoronto.ca).\\
Saba Asaad is with the Department
of Electrical Engineering and Computer Science, York University,
Toronto, ON M3J 1P3, Canada (e-mail: asaads@yorku.ca).\\
H. Vincent Poor is with the Department of Electrical and Computer
Engineering (ECE), Princeton University, Princeton, NJ 08544 USA (e-mail:
poor@princeton.edu).}
}



\maketitle

\begin{abstract}
Second-order methods are widely adopted to improve the convergence rate of learning algorithms. In federated learning (FL), these methods require the clients to share their local Hessian matrices with the parameter server (PS), which comes at a prohibitive communication cost. A classical solution to this issue is to approximate the global Hessian matrix from the first-order information. Unlike in idealized networks, this solution does not perform effectively in over-the-air FL settings, where the PS receives noisy versions of the local gradients. This paper introduces a novel second-order FL framework tailored for wireless channels. The pivotal innovation lies in the PS's capability to directly estimate the global Hessian matrix from the received noisy local gradients via a non-parametric method: the PS models the unknown Hessian matrix as a Gaussian process, and then uses the temporal relation between the gradients and Hessian along with the channel model to find a stochastic estimator for the global Hessian matrix. We refer to this method as \textbf{G}aussian \textbf{p}rocess-based Hessian modeling for wireless \textbf{FL} (GP-FL) and show that it exhibits a linear-quadratic convergence rate. Numerical experiments on various datasets demonstrate that GP-FL outperforms all classical baseline first and second order FL approaches. 
\end{abstract}

\begin{IEEEkeywords}
Second-order federated learning, quasi-Newton method, non-parametric estimation, distributed learning, over-the-air computation.
\end{IEEEkeywords}

\section{Introduction}
\label{Sec:Introduction}
\IEEEPARstart{T}{raditionally}, the training of machine learning (ML) models has followed a centralized approach, with the training data residing in a data center or cloud parameter server (PS). However, in numerous contemporary applications, there is a growing reluctance among devices to share their private data with a remote PS. Addressing this challenge, federated learning (FL) has emerged as a viable solution \cite{mcmahan2017communication}. FL allows devices to actively contribute to the training of a global model by leveraging only their local datasets while being facilitated by a central PS. In the FL framework, devices transmit solely their local updates to the PS, thus sidestepping the need to disclose raw datasets. The process unfolds in several steps: initially, the PS disseminates the current global model parameters to all participating devices. Subsequently, each device conducts local model training based on its unique dataset and transmits the resulting local updates back to the PS. In the final step, the PS aggregates these local updates, producing a new global model parameters for the subsequent iteration of distributed training.

Considering the large sizes of the model updates, the iterative communication between the PS and clients incurs a significant communication cost. This can significantly decelerate the convergence of FL, since the communication are typically rate-limited in practice \cite{amiri2020federated,hamidi2019systems,10487854}. Several lines of research have been dedicated to theoretically characterizing the fundamental trade-offs between learning performance and communication cost, as well as improving the communication efficiency, within the FL framework \cite{braverman2016communication,han2018geometric}. 

\subsection{Communication-Efficient FL}
Early efforts to improve the communication efficiency in the FL framework can be categorized roughly into two approaches: \textbf{($i$)} the first approach tries to reduce the communication overhead per round. One way to achieve this involves utilizing quantization \cite{10091800} and sparsification \cite{bereyhi2024sparse} techniques. These techniques aim to diminish the transmitted bits and eliminate redundant parameter updates. 
In general, these approaches come with the trade-off of potentially degrading the model performance, and they must take into account the compatibility for the aggregation operation in FL \cite{wang2021field}. \textbf{($ii$)} The second approach intends to minimize the total communication rounds. 
The most well-known example is the federated averaging (FedAvg) algorithm, in which clients execute multiple local iterations before sharing their models with the PS \cite{mcmahan2017communication}, which can significantly reduce the total number of rounds. Some variants of the gradient descent (GD) method have further demonstrated the ability to substantially reduce the overall communication rounds compared to the naive GD \cite{reddi2020adaptive,tong2020effective}.

These previous approaches ignore the properties of the communication network, and look at the communication links as rate-limited orthogonal noiseless channels. Nevertheless, later studies have shown that using the properties of the underlying network, further gain in terms of communication efficiency can be achieved \cite{amiri2020federated}. A well-known case is wireless FL, in which clients can use the superposition property of multiple access channel (MAC) to perform the computation directly over the air and significantly reduce the communication costs \cite{yang2020federated}. In fact, in wireless networks the abstract view on communication links corresponds to the conventional digital transmission, where coded symbols are sent using techniques such as orthogonal frequency-division multiplexing (OFDM). For wireless FL, however, the clients can send their local parameters simultaneously via analog transmission, and the PS can apply the concept of over-the-air computation (AirComp) to aggregate the global parameter. This method allows aggregation to be performed directly over the air, significantly reducing the required bandwidth \cite{yang2020federated,bereyhi2023FL}. This approach has gained traction in wireless FL as an effective strategy for mitigating communication costs \cite{zhu2019broadband, amiri2020machine, sery2020analog, 9833972, liu2020privacy, elgabli2021harnessing}.

\subsection{Second-Order Wireless FL}
A standard framework for wireless FL consider first-order algorithms, where the clients transmit their local gradients, and the PS aggregates them gradient directly over the air using the AirComp technique. We refer to this framework as \textit{first-order AirComp}. Similar to other first-order methods, first-order AirComp at best achieves linear convergence, which leads to a relatively large number of iteration rounds to reach the desired accuracy. We further note that the analog nature of AirComp-aided computation leads to extra aggregation error from the channel. This can be seen as zero-mean fluctuations, which adds to the noiseless aggregated gradient, i.e., the estimator that PS computes in perfect FL settings. As a result, the estimator of global gradient computed in first-order AirComp has higher variance as compared with FL with noiseless aggregation. This leads to further degradation in terms of convergence behavior.


One potential solution is to incorporate second-order methods, e.g., Newton-type methods, into the wireless FL setting, as explored in \cite{9810113}. Unlike first-order methods, second-order approaches benefit from a \textit{quadratic} convergence rate by computing the update direction using both first and second-order derivatives of the loss function. The update direction computed in these methods is often called \textit{Newton direction}. 

Integrating second-order information into FL requires further access to the second-order information at the PS. This information is obtained by clients sharing their local Hessian matrices with the PS, which imposes a significant communication burden, especially in wireless settings with strictly-restricted links. To mitigate this issue, various studies have explored ways to approximate the Hessian matrix from the first-order information and/or reduced second-order information. For instance, GIANT \cite{wang2018giant} estimates the Newton direction by utilizing the global gradient and local Hessians on each client, combined with a global backtracking line search across all clients. Although this approach reduces the communication overhead, it still requires partial sharing of second-order information resulting in higher communication load as compared to first-order methods. 

The study in \cite{zhang2015disco} suggests an alternative approach, in which each client the computation of Newton direction is offloaded to the clients: the PS and devices invoke the conjugate gradient descent algorithm to compute iteratively an estimate of the global Newton direction. Although the information shared per communication round in this approach is the same as that of the first-order methods, a single iteration of gradient descent (on the model parameters) requires at least two rounds of communication. To circumvent further this overhead, the recent study in \cite{ghosh2020distributed} proposes a novel second-order method that eliminates the aggregation of local gradients, enabling a single communication round per iteration. Motivated by this work, the authors in \cite{9810113} develop a second-order wireless FL method, in which the clients compute their Newton directions locally and share them with the PS. The PS then estimates the global Newton direction from the aggregation of these local directions. This approach enables devices to communicate with the PS only once per iteration, reducing the communication overhead to that of the first-order methods. 

As a natural extension to available proposals, some recent studies have proposed integration of communication-efficient second-order FL algorithms into the analog computation framework \cite{wang2018giant,9810113}. Experiments however show that, unlike their first-order counterparts, these \textit{second-order AirComp} algorithms struggle to perform adequately. This observation can be intuitively explained as follows: the noisy aggregation of the second-order information in AirComp-aided approaches introduces significant biases to the estimate of the Newton direction, leading to suboptimal performance. The investigations reveal that such biases can severely deteriorates the convergence, and hence the overall efficiency, of these methods \cite{wang2018giant,9810113}. This study aims to develop a novel second-order AirComp algorithm that addresses this challenge in a communication-efficient way.

\subsection{Motivation and Contributions}
This work proposes a novel second-order AirComp scheme for FL which estimates the Newton direction directly from the aggregated first-order information. It is hence extremely communication-efficient, in the sense that it does not require the exchange of local Hessians or a functions of them: in each round, the clients leverage AirComp to share \textit{only} their local gradients. The PS then estimates the global Hessian matrix from a finite sequence of its noisy aggregations. For Hessian estimation, we deviate from the classical deterministic schemes and develop a nonparametric model-based estimator. The proposed estimator assumes a Gaussian prior for the Hessian matrix and determines its posterior distribution conditional to a window of most recent noisy aggregations. It then computes an unbiased estimator of the Newton direction by sampling from the posterior distribution. We refer to this method as \textbf{G}aussian \textbf{p}rocess-based Hessian modeling for \textbf{FL} (GP-FL). 
Our analysis and numerical experiments demonstrate that the scheme can efficiently suppress the undesired directional bias.

In summary, the contributions of this paper are three-fold: 

\noindent $\bullet$ We introduce GP-FL, a novel second-order AirComp algorithm for wireless FL. The key innovation lies in the PS's ability to estimate global Hessian matrix based on the received noisy sum of the gradients. This algorithm represents a fundamental departure from most existing works, which typically focus solely on stochastic gradient (SGD) during training. The incorporation of second-order information markedly diminishes the total communication rounds in AirComp-based FL, thereby enhancing communication efficiency even further.

\noindent $\bullet$ We analyze the convergence behavior of GP-FL and show that it achieves a linear-quadratic convergence rate. Specifically, the number of communication rounds required to reach an \(\varepsilon\)-accurate solution, \(T_{\varepsilon}\), is either \(\mathcal{O} \left( \log \log {1}/{\varepsilon}\right)\) or \(\mathcal{O} \left( \tfrac{\log {1}/{\varepsilon}}{\log{1/\mu}}\right)\). This contrasts with GD-based FL methods, which typically have a linear convergence rate of \(T_{\varepsilon} = \mathcal{O}({1}/{\varepsilon})\).

\noindent $\bullet$ Through extensive experiments on datasets---including three from the LIBSVM library \cite{chang2011libsvm}, Fashion-MNIST \cite{xiao2017fashion}, and CIFAR-\{10,100\} \cite{krizhevsky2009learning}---we show that GP-FL outperforms existing first- and second-order algorithms. The method is evaluated under varying levels of heterogeneity \cite{10619204}, and the impact of key hyperparameters is analyzed.

\subsection{Related Works} 
The FedAvg scheme relies solely on first-order gradients for updates \cite{mcmahan2017communication}, offering significantly faster computation compared to Hessian-based methods, especially in settings with fast communication. Adaptive step-size methods like AdaGrad \cite{duchi2011adaptive} and ADAM \cite{kingma2014adam} have also been adapted to distributed settings, showing improved convergence \cite{karimireddy2020mime}. The adaptive step-size can be represented as a diagonal matrix \(\bold{D}\), providing an alternative preconditioning to the Newton direction \(\bold{H}^{-1} \nabla f(\boldsymbol{\theta})\), where the direction is computed as \(\bold{D}^{-1} \nabla f(\boldsymbol{\theta})\).

The second-order methods studied in the literature can be categorized into two groups: ($i$) those that utilize second-order information implicitly \cite{shamir2014communication,li2019feddane,reddi2016aide}, and ($ii$) those that explicitly compute them \cite{zhang2015disco,wang2018giant,gupta2021localnewton}. DANE \cite{shamir2014communication} computes a mirror descent update on the local loss functions, which is equivalent to the GIANT update for a quadratic function. The study in \cite{li2019feddane} proposes FedDANE as a variant of DANE, specifically tailored for FL, where it uses FedAvg as a baseline with 20 local epochs and observes no improvement with their proposed method. AIDE, proposed in \cite{reddi2016aide}, is an alternative accelerated inexact version of DANE. Another line of study in the first group, i.e., group $i$, considers employing distributed quasi-Newton methods \cite{agarwal2014reliable}. CoCoA \cite{smith2018cocoa} and its trust-region extension \cite{duenner2018trust} also perform local steps on a second-order local subproblem, but they specifically address the special case of generalized linear model objectives.

The second group of studies employ the Hessian by computing it indirectly through the use of the so-called Hessian-free optimization approach. In DiSCO \cite{zhang2015disco}, the clients compute the Hessian-vector products, and subsequently the PS executes the conjugate gradient method \cite{hestenes1952methods}. This process entails one communication round for each iteration of the conjugate gradient method. GIANT \cite{wang2018giant} and LocalNewton \cite{gupta2021localnewton} both employ the conjugate gradient method at the clients: GIANT utilizes the global gradient, while LocalNewton uses the local gradients. The study in \cite{islamov2021distributed} and its FL extension \cite{safaryan2021fednl} iteratively approximate the global Hessian, requiring a similar number of communication rounds as GIANT but achieving better convergence rates. However, they lack experimental comparisons with FedAvg using multiple local steps.

In general, the performance of mentioned methods tends to degrade when integrated into the AirComp framework, in which the PS receives a distorted and noisy aggregation. The proposed scheme in this work, GP-FL, addresses this issue by explicitly accounting for the channel imperfections.

\subsection{Notation} \label{sec:not}
The gradient and Hessian of a function $f(\cdot)$ are denoted by $\nabla f(\cdot)$ and $\nabla^2 f(\cdot)$, respectively. The operators $(\cdot)^{\mathsf{T}}$ and $(\cdot)^{\mathsf{H}}$ denote the transpose and Hermitian transpose, respectively; and $\mathbb{E}(\cdot)$ represents the mathematical expectation. The notation $\mathbf{x}[j]$ refers to the $j$-th entry of vector $\mathbf{x}$. The notation $\mathcal{GP}(\mu,\kappa)$ denotes a Gaussian process with mean $\mu$ and covariance $\kappa$.

For an integer $K$, $[K]$ denotes $\{1,\cdots,K\}$. For a scalar or function $f$, $\{f_k\}_{k \in [K]}=\{f_1,\dots,f_K\}$, and $[f_k]_{k \in [K]}=[f_1,\dots,f_K]^{\mathsf{T}}$. Scalars are denoted by non-boldface letters (e.g. $a$), vectors by boldface lowercase letters (e.g. $\bold{a}$), and matrices by boldface uppercase letters (e.g. $\bold{A}$). The $j$-th entry of vector $\bold{a}$ is denoted by $\bold{a}[j]$. The matrix $\bold{I}$ represents the identity matrix. For two matrices $\bold{A}$ and $\bold{B}$ of the same size, $\bold{A} \preceq \bold{B}$ implies that $\bold{B} - \bold{A}$ is positive semi-definite.

\subsection{Organization}
The remainder of this paper is organized as follows: 
Section \ref{sec:prelem} provides an in-depth discussion of the preliminary knowledge on Newton and quasi-Newton methods. Section \ref{sec:method} formulates the problem. Section \ref{sec:gaus} presents the proposed stochastic approach for estimating the quasi-Newton direction. Section \ref{sec:algconv} provides the convergence analysis for GP-FL. Numerical results and comparison with baselines are given in Section \ref{sec:exp}. Finally, Section \ref{sec:conclusion} concludes the paper. 

\section{Preliminaries} \label{sec:prelem}
Consider a wireless network with $K$ clients and a single PS.  The clients aim to jointly train a common model for a supervised learning task over their local labeled datasets. Let $\mathcal{D}_k$ denote the local dataset at client $k$ which contains $|{\mathcal{D}_k}|$ independently-collected training tuples of the form $(\bold{u}, v)$ with $\bold{u}\in \mathbb{R}^m$ representing a sample input to the model and $v$ denoting its ground-truth label. 

The clients agree on a global model with $d$ learnable parameters. Let vector $\boldsymbol{\theta}\in \mathbb{R}^d$ represent the collection of these parameters into a vector. The ultimate goal is to train this common model by minimizing the global empirical loss that is defined as $f(\boldsymbol{\theta}) \triangleq  \frac{1}{|\mathcal{D}|} \sum_{(\bold{u}, v) \in \mathcal{D}}  \ell (\boldsymbol{\theta}, \bold{u}, v )$, where $\mathcal{D}$ denotes the global dataset, i.e., $\mathcal{D}= \bigcup_{k =1}^{K} \mathcal{D}_k$ and $\ell (\boldsymbol{\theta}, \bold{u}, v )$ is the loss function measuring the prediction error of the model with parameters $\boldsymbol{\theta}$ on input sample $\bold{u}$ relative to the ground-truth $v$.

With the dataset distributed among clients, the global training problem is expressed in terms of local empirical losses. The local empirical loss at client $k$ is $f_k\left(\boldsymbol{\theta}\right) \triangleq \frac{1}{|\mathcal{D}_k|} \sum_{(\bold{u}, v) \in \mathcal{D}_k} \ell (\boldsymbol{\theta}, \bold{u}, v )$. The global empirical loss is $f(\boldsymbol{\theta}) = \frac{1}{|\mathcal{D}|} \sum_{k \in \mathcal{S}} |\mathcal{D}_k| f_k\left(\boldsymbol{\theta} \right)$, where $\mathcal{S} \subseteq [K]$ is the set of participating devices. The distributed training problem is formulated as
\begin{align} \label{eq:FL}
 \min_{\boldsymbol{\theta}} \frac{1}{|\mathcal{D}|} \sum_{k \in \mathcal{S}} |\mathcal{D}_k| f_k \left(\boldsymbol{\theta}\right),
\end{align} 
where $f_k \left(\boldsymbol{\theta} \right)$ is computed \textit{locally} at client $k$.

\subsection{First-Order Federated Learning} \label{sec:fed}
The de-facto solution to problem \eqref{eq:FL} is to employ the distributed (stochastic) gradient descent method (DGD/DSGD). In the $t$-th round of DGD, the PS shares parameter $\boldsymbol{\theta}_{t}$ with all clients. 
Each client computes its \textit{local gradient} at $\boldsymbol{\theta}_{t}$, i.e., $\nabla f_k(\boldsymbol{\theta}_{t})$,
and returns it to the PS. The PS then aggregates these \textit{local gradients} into a \textit{global gradient} as
\begin{align} 
\nabla f(\boldsymbol{\theta}_{t})= \frac{ |\mathcal{D}_k|}{|\mathcal{D}|} \sum_{k \in \mathcal{S}_t} \nabla f_k(\boldsymbol{\theta}_{t}),
\end{align}
and performs one step of gradient descent, i.e., it computes $\boldsymbol{\theta}_{t+1} = \boldsymbol{\theta}_{t} - \eta_t \nabla f(\boldsymbol{\theta}_{t})$, where $\eta_t>0$ is the global learning rate at the $t$-th round. First-order methods often converge slowly with linear (or sub-linear) convergence rates \cite{deng2021local,sharma2022federated}, requiring many iterations to approach a local minimum. Since the number of communication rounds in FL corresponds to iterations, this increases communication overhead. 


\subsection{Newton's Method in Federated Learning} \label{sec:newt}
A method to improve FL communication efficiency is to use faster-converging optimizers, such as second-order methods. These methods estimate the loss landscape's local curvature, enabling faster and more adaptive updates. While computationally intensive per iteration, they require fewer iterations to converge. Second-order methods are Newton-type techniques. Specifically, Newton's method updates the model using the second-order Taylor series approximation of the empirical loss. For a small $\mathfrak{d} \in \mathbb{R}^d$, the Taylor series expansion of $f(\boldsymbol{\theta} + \mathfrak{d})$ around $\boldsymbol{\theta}$ up to the quadratic term is given by $f(\boldsymbol{\theta}+\mathfrak{d}) \approx   f(\boldsymbol{\theta}) + \mathfrak{d}^T \nabla f(\boldsymbol{\theta}) + \frac{1}{2} \mathfrak{d}^T \nabla^2 f(\boldsymbol{\theta}) \mathfrak{d}$, with $\nabla^2 f(\boldsymbol{\theta})$ being the Hessian matrix of $f$ computed at $\boldsymbol{\theta}$. Assuming the Hessian is positive definite, the Newton direction that minimizes this quadratic approximation is given by $\mathfrak{d}^\star= -(\nabla^2 f(\boldsymbol{\theta}))^{-1} \nabla f(\boldsymbol{\theta})$. Newton's method hence updates the model parameters by stepping proportional to $\mathfrak{d}^\star$.

Deploying Newton's method for the distributed training task in \eqref{eq:FL}, the PS needs to update the global model as
\begin{subequations}
	\begin{align} \label{eq:realnew}
\boldsymbol{\theta}_{t+1} &=   \boldsymbol{\theta}_{t} - \eta_t \Big(\nabla^2 f(\boldsymbol{\theta}_{t}) \Big)^{-1} \nabla f(\boldsymbol{\theta}_{t}) \\ 
&= \boldsymbol{\theta}_{t} - \eta_t \Big( \sum_{k \in \mathcal{S}_t} \nabla^2 f_k(\boldsymbol{\theta}_{t}) \Big)^{-1}     \sum_{k \in \mathcal{S}_t} \nabla f_k(\boldsymbol{\theta}_{t}).
\end{align}
\end{subequations}
This requires transmitting local Hessians to the PS, creating a trade-off: second-order methods reduce communication rounds via faster convergence but increase communication costs per round due to larger transmission. Standard analyses show that the naive use of Newton's method in distributed settings often leads to inefficiency, as communication overhead dominates.

\subsection{Quasi-Newton Search Directions}
To address issues with Newton's method in distributed settings, quasi-Newton methods provide an alternative by avoiding explicit computation of local Hessians. Instead of $\nabla^2 f(\boldsymbol{\theta})$ in \eqref{eq:realnew}, an approximation matrix $\bold{B}_{\boldsymbol{\theta}}$ is built using the sequence of local gradients from previous iterations. This matrix is updated iteratively to include new information. To understand the quasi-Newton method, assume $f(\cdot)$ is twice continuously differentiable. We can hence write
\begin{align} \label{eq:taylor}
\nabla f(\boldsymbol{\theta} +\mathfrak{d}) =  \nabla f(\boldsymbol{\theta}) + \int_0^1 \nabla^2 f(\boldsymbol{\theta} +\tau \mathfrak{d}) \mathfrak{d} d \tau,
\end{align}
By adding and subtracting the term $\nabla^2 f(\boldsymbol{\theta})\mathfrak{d}$ to the right-hand side (r.h.s.) of \eqref{eq:taylor}, we have
\begin{align} \label{eq:taylor2}
\nabla f(\boldsymbol{\theta} +\mathfrak{d}) &=  \nabla f(\boldsymbol{\theta}) + \nabla^2 f(\boldsymbol{\theta})\mathfrak{d} \nonumber \\
 &+  \int_0^1 \big[ \nabla^2 f(\boldsymbol{\theta} +\tau \mathfrak{d})-\nabla^2 f(\boldsymbol{\theta}) \big] \mathfrak{d} d \tau.
\end{align}
Note that since the gradient function $\nabla f(\cdot)$ is continuous,  the magnitude of the integral on the r.h.s. of \eqref{eq:taylor2} is $\mathcal{O} (\Vert \mathfrak{d} \Vert )$. We now set $\boldsymbol{\theta}=\boldsymbol{\theta}_{t-1}$ and $\mathfrak{d}= \boldsymbol{\theta}_{t}-\boldsymbol{\theta}_{t-1}$. This leads to
\begin{align} 
\nabla f(\boldsymbol{\theta}_{t}) - \nabla f(\boldsymbol{\theta}_{t-1})  \label{eq:O}
= \nabla^2 f(\boldsymbol{\theta}_{t-1}) \left( \boldsymbol{\theta}_{t} - \boldsymbol{\theta}_{t-1}\right)+ \boldsymbol{\varepsilon},
\end{align}
with $\boldsymbol{\varepsilon} = \mathcal{O} (\|\boldsymbol{\theta}_{t} - \boldsymbol{\theta}_{t-1} \|)  $. Note that when $\boldsymbol{\theta}_{t-1}$ and $\boldsymbol{\theta}_{t}$ reside in a region close to the minimizer, the r.h.s. of  \eqref{eq:O} is dominated by the first term. We can hence write
\begin{align} \label{eq:secorder}
	\nabla f(\boldsymbol{\theta}_{t}) - \nabla f(\boldsymbol{\theta}_{t-1}) \approx
\nabla^2 f(\boldsymbol{\theta}_{t-1}) \left( \boldsymbol{\theta}_{t} - \boldsymbol{\theta}_{t-1}\right)   .    
\end{align}

The approximation in \eqref{eq:secorder} suggests estimating the Hessian matrix using a matrix $\bold{B}_t$ that satisfies the relation in \eqref{eq:secorder} as an identity. Specifically, by defining $\bold{w}_t = \boldsymbol{\theta}_t - \boldsymbol{\theta}_{t-1}$ and $\bold{y}_t = \nabla f(\boldsymbol{\theta}_t) - \nabla f(\boldsymbol{\theta}_{t-1})$, the quasi-Newton method computes Newton's direction using $\bold{B}_t$ that satisfies the following equation, commonly referred to as the secant equation:
\begin{align} \label{eq:B}
\bold{B}_t \bold{w}_{t} = \bold{y}_t,
\end{align}
which is an estimator of the Hessian matrix. A classic solution to \eqref{eq:B} given by Broyden–Fletcher–Goldfarb–Shanno (BFGS) method \cite{nocedal1999numerical}, which iteratively updates matrix $\bold{B}_{t+1}$ using information from ${\bold{B}_{t},\bold{w}_{t}, \bold{y}_t}$ according to $\bold{B}_{t+1}= \bold{B}_{t} - \frac{ \bold{B}_{t} \bold{w}_{t} \bold{w}_{t}^{\mathsf{T}} \bold{B}_{t}}{\bold{w}_{t}^{\mathsf{T}} \bold{B}_{t} \bold{w}_{t}}   + \frac{\bold{y}_t\bold{y}_t^{\mathsf{T}}}{\bold{w}_{t}^{\mathsf{T}}\bold{y}_t}$, ensuring the positive-definiteness of $\bold{B}_{t}$ and consequently making the following quasi-Newton search direction a descent direction:
\begin{align} \label{eq:quasi-dir}
\mathfrak{d}_t = -(\bold{B}_t)^{-1} \nabla f(\boldsymbol{\theta}_{t}).  
\end{align}

Unlike Newton's method, the quasi-Newton approach relies only on computed gradients, reducing communication requirements. However, this comes with higher variance in estimating the optimal direction. Studies show that the quasi-Newton method strikes a good trade-off, achieving faster convergence than first-order methods with similar communication costs.

\begin{remark}
Note that using BFGS involves only a rank-one update, allowing us to efficiently compute the inverse $(\bold{B}_t)^{-1}$ as required in \eqref{eq:quasi-dir} via the Sherman-Morrison formula \cite{shermen1949adjustment}.    
\end{remark}

\subsection{GP-FL: Quasi-Newton Method in Wireless Networks}
We now develop the second-order algorithm \textit{GP-FL} for FL in wireless networks based on the quasi-Newton method. The algorithm uses two key notions to adapt the quasi-Newton search directions approach to wireless networks:
\begin{inparaenum}
	\item[($i$)] it uses \textit{AirComp} to realize the aggregation directly over the air, and 
	\item[($ii$)] it models the quasi-Newton estimator of the Hessian matrix as a Gaussian process and employs the maximum-likelihood method to estimate it from noisy aggregations.
\end{inparaenum}

In the next sections, we present GP-FL, sketch its derivation and analyze its convergence. For the sake of simplicity, we use the following notation hereafter in the paper: we denote the local gradient of client $k$ in iteration $t$ as $\bold{g}_{t,k} \triangleq \nabla f_k(\boldsymbol{\theta}_{t})$ and global gradient as $\bold{g}_{t} \triangleq \nabla f(\boldsymbol{\theta}_{t})$, i.e., 
\begin{align} 
\bold{g}_{t} = \frac{1}{|\mathcal{D}|} 
	\sum_{k \in \mathcal{S}_t}  |\mathcal{D}_k|~ \bold{g}_{t,k}. \label{sumgrad2}
\end{align}
Using this notation, we can summarize the vanilla quasi-Newton method as follows: in iteration $t$,
\begin{enumerate}
	\item The PS aggregates local gradients as per \eqref{sumgrad2}.
	\item It computes the difference
	\begin{align} 
		&\bold{y}_t = 
		\bold{g}_{t} - \bold{g}_{t-1}. \label{sumgrad1}
	\end{align}
\item It finds a quasi-Newton matrix by solving \eqref{eq:B} for $\bold{B}_t$.
\item It updates the global model parameter as $\boldsymbol{\theta}_{t+1} = \boldsymbol{\theta}_{t} - \eta_t \bold{B}_t^{-1} \bold{g}_{t}$, and sets $\bold{w}_{t+1} = \boldsymbol{\theta}_{t+1} - \boldsymbol{\theta}_{t}$.
\end{enumerate}

\section{GP-FL: Over-the-Air Aggregation} \label{sec:method}
During each FL communication round, two parameter exchanges occur. The first is a downlink transmission where the PS broadcasts the global model. Since only one set of parameters is transmitted, the communication cost is negligible and can be handled via encoded data over the broadcast channel \cite{amiri2020machine}. The other parameter exchange occurs over uplink channels, where clients send their local gradients to the PS. This is the main source of communication overhead in wireless FL: using conventional orthogonal multiple-access transmission, the time-frequency resources scale linearly with the number of clients \(K\). For large \(K\), this leads to substantial overhead. To address this, we use AirComp to perform gradient aggregation directly over the air \cite{amiri2020machine,elgabli2021harnessing,yang2020federated,zhu2019broadband,sery2020analog,liu2020privacy}. This approach completes the aggregation within a single coherent block, significantly reducing communication overhead.

\subsection{Uplink Channel Model} \label{sec:comm}
We consider a Gaussian fading multiple access channel with slow frequency-flat fading, where the coherence time exceeds $d$ symbol intervals. Clients synchronously transmit $d$-dimensional local gradient entries over $d$ consecutive intervals within a single channel coherence interval. The PS is assumed to have an array of $N$ antennas, while each client has a single antenna.

We denote the channel coefficient vector between client $k$ and the PS in round $t$ as $\bold{h}_{t,k} \in \mathbb{C}^N$. Assuming perfect channel state information (CSI) at both ends, clients can adjust their transmitted signals based on the channel coefficients.


\subsection{Model Aggregation via AirComp}


The AirComp-based model aggregation works as follows: client $k$ sends $\phi_k(\bold{g}_{t,k})$ for some pre-processing function $\phi_k(\cdot)$ simultaneously with all other clients over the channel. The PS then receives a superimposed version of these transmissions. Denoting the received signal by $\bold{r}_t$, the PS estimates the aggregated gradient as $\hat{\bold{g}}_t = \psi (\bold{r}_t)$ for some post-processing function
$\psi(\cdot)$. The pre- and post-processing functions serve as analog filters aiming to fulfill the transmission constraints, e.g, transmit power constraint, and minimize the estimation error.

Due to the diversity of $\bold{g}_{t,k}$ among devices, a universal pre- and post-processing function cannot guarantee the joint stationarity of the information-bearing symbols. i.e., local gradients. 
We hence opt for a data-and-CSI-aware design: prior to uplink transmission, client $k$ normalizes its local gradient as 
\begin{align} \label{eq:pre-proc}
\bold{s}_{t,k} = \frac{\bold{g}_{t,k}}{\| \bold{g}_{t,k}\|_2}.
\end{align}
As the normalized gradients are unit-norm, we can model them 
as jointly stationary processes. This means that $\mathbb{E}(\|\bold{s}_{t,k}[j]\|^2)={1}/{d}$ for entry $ j \in [d]$.
After normalization, client $k$ sends 
\begin{align} \label{eq:bs}
\bold{x}_{t,k}   = b_{t,k}  \bold{s}_{t,k} ,
\end{align}
over its uplink channel, where $b_{t,k} \in \mathbb{R}$ is a 
power scaling factor satisfying
\begin{align} \label{eq:power}
\mathbb{E}(\vert b_{t,k}  \bold{s}_{t,k} [j] \vert^2) = \frac{b_{t,k}^2}{d} \leq \mathsf{P_0}, 
\end{align}
with $\mathsf{P_0}$ denoting maximum transmit power of devices.

The PS receives the superimposed version of transmitted signals: let $\bold{r}_{t,j} \in \mathbb{C}^N$ denotes the received signal of the PS in the $j$-th symbol interval of iteration $t$. Using the channel model, we can write
\begin{align} \nonumber
\bold{r}_{t,j} &= \sum_{k \in \mathcal{S}_t}   \bold{h}_{t,k}   \bold{x}_{t,k} [j] + \bold{n}_{t,j} = \sum_{k \in \mathcal{S}_t}   \hat{\bold{h}}_{t,k} b_{t,k} \bold{g}_{t,k} [j] + \bold{n}_{t,j}, 
\end{align}
where $\hat{\bold{h}}_{t,k}\triangleq{\bold{h}_{t,k}}/{\|\bold{g}_{t,k} \|_2}$ is the \textit{effective} channel, and $\bold{n}_{t,j} \in \mathbb{C}^N$ is complex additive white Gaussian noise (AWGN) with mean zero and variance $\sigma^2$, i.e., $\bold{n}_{t,j} \sim \mathcal{N}(\boldsymbol{0},\sigma^2\bold{I})$. 

The PS uses the received signals $\bold{r}_{t,j}$ for $j\in[d]$ to determine an estimator of the global gradient $\mathbf{g}_t$. Let $\bold{d}_t[j]$ denote the estimator of $\mathbf{g}_t[j]$. Using linear post-processing, the PS determines $\bold{d}_t[j]$ from $\bold{r}_{t,j}$ as 
\begin{align} \label{eq:r}
\bold{d}_t[j] &= \frac{\bold{c}_t^{\mathsf{H}} \bold{r}_{t,j}}{\sqrt{\alpha_t}}\\
&= \frac{1}{\sqrt{\alpha_t}} \left( \bold{c}_t^{\mathsf{H}} \sum_{k \in \mathcal{S}_t} \hat{\bold{h}}_{t,k} b_{t,k} \bold{g}_{t,k} [j] + \bold{c}_t^{\mathsf{H}}\bold{n}_{t,j}\right),
\end{align}
for some linear receiver $\bold{c}_t \in \mathbb{C}^N$ and the power factor $\alpha_t \neq 0$. 


Then, the PS employs a zero-forcing approach to estimate $\bold{g}_t$. It first computes a linear receiver $\bold{c}_t$ using the CSI (details on computing $\bold{c}_t$ will be discussed later) and determines the power scaling factor $\alpha^{\text{ZF}}_t$ as: 
\begin{align} \label{eq:zf}
\alpha^{\text{ZF}}_t = \mathsf{P_0} d \min_{k \in \mathcal{S}_t} \frac{\| \bold{c}_t^{\mathsf{H}}\hat{\bold{h}}_{t,k}\|_2^2}{|\mathcal{D}_k|^2}.  
\end{align}
This way, the PS guarantees that all clients satisfy their transmit power constraint, and then it broadcasts $\alpha^{\text{ZF}}_t$ to the clients. Client $k$ upon receiving $\alpha^{\text{ZF}}_t$  determines its scaling as
\begin{align} \label{eq:bk}
b_{t,k} = \sqrt{\alpha^{\text{ZF}}_t} |\mathcal{D}_k| \frac{ \hat{\bold{h}}_{t,k}^{\mathsf{H}} \bold{c}_t }{\| \bold{c}_t^{\mathsf{H}}\hat{\bold{h}}_{t,k}\|^2_2},
\end{align}
which satisfies the transmit power constraint \eqref{eq:power}. 

The above coordination of clients realizes the desired superposition over the air in the absence of noise during uplink transmission. Substituting \eqref{eq:zf} and \eqref{eq:bk} in \eqref{eq:r}, it is concluded that the PS aggregates $\bold{d}_t[j] = \sum_{k \in \mathcal{S}_t} |\mathcal{D}_k| \bold{g}_{t,k}[j] + \frac{1}{\sqrt{\alpha^{\text{ZF}}_t}} \bold{c}_t^{\mathsf{H}}  \bold{n}_{t,j}$, which after scaling by ${1}/{|\mathcal{D}|}$ concludes the following
\begin{align}
\Tilde{\bold{g}}_{t} [j] 
&= \frac{1}{|\mathcal{D}|} \bold{d}_t[j] 
= \bold{g}_{t} [j] + \frac{1}{|\mathcal{D}| \sqrt{\alpha^{\text{ZF}}_t}} \bold{c}_t^{\mathsf{H}}  \bold{n}_{t,j}.\label{eq:receveidnoisy}
\end{align}
The estimator of the gradient in iteration $t$ is hence given by 
\begin{align} \label{eq:distortg}
\Tilde{\bold{g}}_{t} =  \bold{g}_{t} +  \Tilde{\bold{n}}_t,  
\end{align}
where, for notational simplicity, we defined the $d$-dimensional column vector $\Tilde{\bold{n}}_t$ as
\begin{align} \label{eq:noisedef}
\Tilde{\bold{n}}_t= \frac{1}{|\mathcal{D}| \sqrt{\alpha^{\text{ZF}}_t}} [  \bold{c}_t^{\mathsf{H}} \bold{n}_{t,1} , \ldots, \bold{c}_t^{\mathsf{H}} \bold{n}_{t,d} ]^{\mathsf{T}}.  
\end{align}

\begin{remark}
In general we can assume $\bold{g}_t$ to be complex, as we can represent every two entries of a complex gradient as a pair of in-phase and quadrature components.
\end{remark}

\begin{remark}
Note that $\mathbb{E}[\Tilde{\bold{n}}_t]=0$, and thus $\Tilde{\bold{g}}_{t}$ is an unbiased estimator of $\bold{g}_{t}$. As such, assuming that $\bold{g}_{t}$ itself is an unbiased estimator of the true gradient, $\Tilde{\bold{g}}_{t}$ is an unbiased estimator of the true gradient, as well.   
\end{remark}


\subsection{Receiver Design for Model Aggregation} \label{sec:receive}
The variance of $\Tilde{\bold{n}}_t$ which specifies the variance of the gradient estimator is given by $\Tilde{\sigma}^2 (\bold{c}_t) = \frac{\sigma^2 \Vert \bold{c}_t \Vert^2} {|\mathcal{D}|^2 {\alpha^{\text{ZF}}_t}}$. Substituting \eqref{eq:zf} in this term, we have
\begin{align}
    \Tilde{\sigma}^2 (\bold{c}_t) = \frac{\sigma^2}{|\mathcal{D}|^2 \mathsf{P_0} d} \max_{k \in \mathcal{S}_t} \frac{|\mathcal{D}_k|^2 \Vert \bold{c}_t \Vert^2}{\| \bold{c}_t^{\mathsf{H}}\hat{\bold{h}}_{t,k}\|_2^2}.
\end{align}
Thus, for a given set of selected devices $\mathcal{S}_t$, the optimal receiver that minimizes the estimation variance is given by a solution to the following optimization problem:
\begin{align} \label{opt1}
\min_{\bold{c}_t}  \max_{k \in \mathcal{S}_t} \frac{|\mathcal{D}_k|^2 \Vert \bold{c}_t \Vert^2}{\| \bold{c}_t^{\mathsf{H}}\hat{\bold{h}}_{t,k}\|_2^2}  . 
\end{align}

Using a similar analysis as in \cite{8364613}, the optimization problem in \eqref{opt1} can be reformulated as
\begin{align}
    \min_{\bold{c}_t} \Vert \bold{c}_t \Vert^2 \qquad \text{s.t.} \quad    \| \bold{c}_t^{\mathsf{H}}\hat{\bold{h}}_{t,k}\|_2^2  \geq |\mathcal{D}_k|^2, \quad \forall k \in \mathcal{S}_t
\end{align}
which is a quadratically constrained quadratic programming problem that is computationally challenging. However, it can be transformed into a difference-of-convex-function (DC) program, as shown in Appendix \ref{app:DC}.

\begin{remark}
The receiver $\bold{c}_t$ is updated only once per channel coherence time, and hence it is realistic to assume that we coordinate the network after each update of $\mathbf{c}_t$, as it occurs with the same rate as for the CSI update.    
\end{remark}

\subsection{Device Selection Algorithm} \label{sec:select}
The device selection is a combinatorial problem, which lies int the set of nondeterministic polynomial (NP) hard problems. We therefore invoke the sub-optimal approach developed in \cite{9810113} to approximate its solution via Gibbs sampling (GS) method. The key idea in the GS method is to iteratively sample a device set from the neighboring devices based on a suitable distribution. This iterative process allows the selected devices to gradually converge towards an optimal set. We omit the details of the algorithm due to lack of space and refer the interested readers to \cite{9810113} for more details.

\section{GP-FL: Hessian Estimation} \label{sec:gaus}
In the quasi-Newton approach, the PS estimates the Newton direction via $\bold{B}_t$ computed from the subsequent gradient estimators $\bold{g}_{t}$ and $\bold{g}_{t-1}$ according to \eqref{eq:B}. With over-the-air aggregation, the gradient estimators, i.e., $\Tilde{\bold{g}}_{t}$ and $\Tilde{\bold{g}}_{t-1}$, are further distorted by effective channel noise process resulting in higher error variances. As demonstrated in our numerical investigations in Section \ref{sec:exp}, the direct derivation of $\bold{B}_t$ from the aggregated gradients $\Tilde{\bold{g}}_{t}$ and $\Tilde{\bold{g}}_{t-1}$ can significantly degrade the training performance as compared to the case with gradient estimators collected via noise-free aggregation. This observation can be intuitively explained as follows: the variance introduced by over-the-air aggregation leads (through the nonlinear procedure of solving \eqref{eq:B}) to an estimator whose estimate of Newton's direction is biased. The higher~the~aggregation variance is, the more this directional bias will be. 


Unlike the primal estimation error in the mini-batch gradients, i.e., $\bold{g}_{t}$, whose statistics is unknown, the computation error introduced by the over-the-air aggregation is statistically known to us. This knowledge can be potentially used to suppress the impact of computation error and extract a better estimator of the Hessian matrix. The key challenge in this respect is however that the Hessian estimator is related to the noisy aggregations through a nonlinear transform, i.e., the solution to \eqref{eq:B}. To overcome this challenge we follow the model-based approach suggested in various lines of work in the literature, e.g., \cite{wills2021stochastic}, to compute a better estimator of the Hessian matrix. 

\subsection{Model-based Hessian Estimation}
In model-based estimation approach, the stochastic nature of the solution to \eqref{eq:B} is described through a stochastic process, which approximates the original process. In the sequel, we adapt the Gaussian model for this problem. This choice is particularly useful as Gaussian processes offer non-parametric probabilistic models to capture complexities of nonlinear functions \cite{schulz2018tutorial}.\footnote{Modeling Hessians as Gaussian processes was proposed in \cite{wills2021stochastic} for use in noisy settings but has not yet been applied to tasks like FL.}

To understand the idea of probabilistic modeling of Hessian, let us look more precisely into the noisy quasi-Newton matrix computed from the over-the-air aggregations: 
at round $t$, the PS uses its latest two aggregations to compute 
\begin{align} \label{eq:nosiyy}
\Tilde{\bold{y}}_t =   \Tilde{\bold{g}}_{t} - \Tilde{\bold{g}}_{t-1},    
\end{align}
The quasi-Newton matrix $\Tilde{\bold{B}}_t$, which is an estimator of Hessian, can then be computed from $\Tilde{\bold{y}}_t$ by solving
\begin{align} \label{eq:noisyB}
\Tilde{\bold{B}}_t \bold{w}_{t} =  \Tilde{\bold{y}}_t.   
\end{align}
With ideal links, \(\Tilde{\bold{y}}_t = \bold{y}_t\), and using a deterministic scheme like BFGS, the solution to \eqref{eq:noisyB} matches that of \eqref{eq:B}. However, with over-the-air computation, deterministic methods yield a Hessian estimate with potentially higher bias and variance compared to the standard case with perfect aggregation\footnote{This is demonstrated in the numerical results.}.  To address this, we deviate from the classical approach of using deterministic schemes to solve \eqref{eq:noisyB}. Instead, we model \(\Tilde{\bold{B}}_t\) and \(\Tilde{\bold{y}}_t\) as jointly Gaussian random processes. We approximate their statistics from the observation sequence and use these to derive a more robust Hessian estimator, effectively mitigating the impact of aggregation noise.

\subsection{Quasi-Newton Matrix as Gaussian Process}
We start by modeling the \textit{prior}: let \(\Tilde{\bold{B}}_t\) be a Gaussian matrix with mean \(\mathbf{M}\) and covariance \(\bold{C}\). For \(i,j \in [d]\), entry \((i,j)\) of \(\Tilde{\bold{B}}_t\) is an independent Gaussian process with mean \(\mu_{i,j} = [\mathbf{M}]_{i,j}\) and variance \(\psi_{i,j}\) given by \(\bold{C}\). For simplicity, we focus on a single entry of \(\Tilde{\bold{B}}_t\), denoting it as \(\Tilde{b}_{t} \sim \mathcal{N}(\mu,\psi)\), dropping the index \((i,j)\). This does not affect the generality of the analysis, as entries of \(\Tilde{\bold{B}}_t\) are statistically independent and follow the same estimation procedure.

Our ultimate goal is to model $\Tilde{b}_t$ from the observations $\Tilde{\bold{y}}_t$ collected through time $t$. To this end, we collect the last $r$ gradient difference in a set $\mathcal{Y}_t$, i.e., 
\begin{align}
    \mathcal{Y}_t = \{ \Tilde{\bold{y}}_{t-r}, \dots , \Tilde{\bold{y}}_{t} \},
\end{align}
and model it jointly with entry $\Tilde{b}_t$ as a Gaussian processes $\mathcal{GP}$. More precisely, let us define $\bold{z} = [ \bold{o}_t^{\sf T}, \Tilde{b}_t ]^{\sf T}$, where $\bold{o}_t$ is defined as the concatenation of entries in $\mathcal{Y}_t$, i.e., 
\begin{align}
    \bold{o}_t = \left[ \Tilde{\bold{y}}_{t-r+1}^{\sf T}, \ldots, \Tilde{\bold{y}}_{t}^{\sf T} \right]^{\sf T}.
\end{align}
We model $\bold{z}_t$ as a Gaussian vector whose mean is $\boldsymbol{\xi}_t\in\mathbb{R}^{rd+1}$ and whose covariance matrix $\boldsymbol{\Sigma}_t \in\mathbb{R}^{(rd+1)\times (rd+1)}$. Considering this statistical model, our goal is to use the sample data collected in the last $r$ communication rounds, i.e., $\mathcal{Y}_t$, to find an estimate of the mean and covariance. We then use the assumed statistical model to find an alternative estimator for $\Tilde{b}_t$.

\subsection{Estimating Parameters of Gaussian Process}
To estimate the covariance matrix, we follow the conventional approach in the literature \cite{do2007gaussian}: we estimate the covariance of $\bold{z}_t$ from its samples using the kernel function $\kappa$. The choice of an appropriate kernel is based on assumptions such as smoothness and likely patterns that are expected in the data. Given the nature of data in this problem, i.e., the fact that $\Tilde{\bold{y}}_t$ is the gradient difference, one popular choice of the kernel is the radial basis function \cite{do2007gaussian}. 
\begin{definition}[Radial basis kernel]
For inputs $\bold{u} \in \mathbb{R}^{n}$ and $\bold{v} \in \mathbb{R}^{m}$, the radial basis function $\kappa$ computes an $n \times m $ matrix whose entry in the $i$-th row and $j$-th column is obtained as
\begin{align} \label{eq:kapp}
[\kappa(\bold{u},\bold{v})]_{i,j} = \exp \left( -\frac{\vert [\bold{u}]_i - [\bold{v}]_j \vert^2}{2\tau^2}    \right).  
\end{align}
\end{definition}
It is worth mentioning that using the radial basis kernel, the estimator of the covariance matrix computed from $\bold{z^0}$ sampled from $\bold{z}$, i.e., $\kappa(\bold{z^0},\bold{z^0})$, is a positive semi-definite matrix.

To use the above covariance estimator, we require a sample from $\bold{z}_t$. To this end, we use a deterministic solver to find a solution to \eqref{eq:noisyB} for the last $r$ communication rounds. This means that we find $\Tilde{\mathbf{B}}_i^0$ by solving \eqref{eq:noisyB} with $\Tilde{\bold{y}}_i^0$ for $i\in\{ t-r+1, \ldots,t \}$, where $\Tilde{\bold{y}}_i^0$ denotes to the sample gradient difference computed from the observed received signals.\footnote{Superscript $0$ is to distinguish random samples from stochastic processes.} Let $\Tilde{b}_i^0$ be an entry of $\Tilde{\mathbf{B}}_i^0$. We sample $\bold{z}$ as $\bold{z}^0 = [ \bold{o}_t^{0 \sf T}, \Tilde{b}_t^0 ]^{\sf T}$ with
\begin{align}
    \bold{o}_t^0 = \left[ \Tilde{\bold{y}}_{t-r+1}^{0\sf T}, \ldots, \Tilde{\bold{y}}_{t}^{0\sf T} \right]^{\sf T},
\end{align}
and estimate the covariance of $\bold{z}_t$ with the sample covariance matrix $\kappa(\bold{z}_t^0,\bold{z}_t^0)$ computed by the radial basis kernel.

We next compute an estimator of the mean $\boldsymbol{\xi}_t$ using a moving average: we estimate the mean of $\Tilde{b}_t$ by arithmetic average of $\Tilde{b}_{t-r+1}^0, \ldots, \Tilde{b}_{t}^0$ denoted by $\hat{\mathbb{E}} \{ \Tilde{b}_{t}^0 \}$, and the mean of $\bold{o}_i$ by the moving average of $\bold{o}_i^0$, i.e., we set the mean of $\Tilde{\bold{y}}_i$ to be the average of $\{ \Tilde{\bold{y}}_{t-r}^0, \dots , \Tilde{\bold{y}}_{i}^0 \}$ for $i\in\{ t-r+1, \ldots,t \}$. Let us denote this moving average by $\mu[\bold{o}_t^0]$. We can then approximate the random process $\bold{z}_t$ as
\begin{align}\label{eq:zt}
\bold{z}_t
\sim
\mathcal{N} 
\begin{pmatrix}
\begin{bmatrix}
\hat{\mathbb{E}} \{ \Tilde{b}_{t}^0 \}
\\
\mu[\bold{z}_t^0]
\end{bmatrix} 
, 
&
\begin{bmatrix}
\beta_t & \boldsymbol{\phi}_t^{\sf T}\\
\boldsymbol{\phi}_t & \bold{K}_t
\end{bmatrix} 
\end{pmatrix},
\end{align}
for scalar $\beta_t = \kappa (\Tilde{{b}}_{t}^0,\Tilde{{b}}_{t}^0)$, vector $\boldsymbol{\phi}_t = \kappa (\bold{o}_{t}^0,\Tilde{b}_{t}^0)$, and symmetric and positive semi-definite matrix $\bold{K}_t = \kappa (\bold{o}_{t}^0,\bold{o}_{t}^0)$.

\subsection{Estimating Hessian via Posterior Sampling}
The statistical model considered for the quasi-Newton matrix and the sample observations enables us to find an alternative estimator for the Hessian matrix. To this end, we derive the posterior distribution of $\Tilde{\bold{B}}_t$ in the following lemma.

\begin{lemma}[Posterior of quasi-Newton matrix]\label{lem:posterior}
 Consider the Gaussian model for $\bold{z}_t$ given in \eqref{eq:zt}. The entry $\Tilde{b}_t$ conditional to observation $\bold{o}_t$ is distributed Gaussian with mean $\zeta (\bold{o}_t)$ and variance $\psi (\bold{o}_t)$ that are given by
 \begin{subequations} \label{eq:sub}
\begin{align}
\zeta (\bold{o}_t) &= \hat{\mathbb{E}} \{ \Tilde{b}_{t}^0 \} - \bold{a}_t^{\sf T}
(\bold{o}_t - \mu[\bold{o}_t^0] ), \\ 
\psi (\bold{o}_t) &= \beta_t - \bold{a}_t^{\sf T} \boldsymbol{\phi}_t,
\end{align}
\end{subequations}
where $\bold{a}_t$ is the solution to $\bold{K}_t \bold{a}_t = \boldsymbol{\phi}_t$ which can be computed via the conjugate gradient algorithm.
\end{lemma}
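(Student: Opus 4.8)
The plan is to recognize \Cref{lem:posterior} as a direct instance of the standard formula for the conditional law of one block of a jointly Gaussian vector given the other block. Since \eqref{eq:zt} already specifies $\bold{z}_t$ as Gaussian with a block-partitioned mean and covariance, essentially all that remains is to read off the blocks, substitute them into the Gaussian conditioning identity, and then re-express the matrix inverse in terms of the vector $\bold{a}_t$ so that the stated closed form emerges.

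First I would recall the conditioning identity: if a jointly Gaussian pair $(X,Y)$ has mean $(\mu_X,\mu_Y)$ and covariance blocks $\Sigma_{XX},\Sigma_{XY},\Sigma_{YX}=\Sigma_{XY}^{\sf T},\Sigma_{YY}$, then $X\mid Y=y$ is Gaussian with mean $\mu_X+\Sigma_{XY}\Sigma_{YY}^{-1}(y-\mu_Y)$ and covariance $\Sigma_{XX}-\Sigma_{XY}\Sigma_{YY}^{-1}\Sigma_{YX}$. This is provable from scratch by writing the joint density, completing the square in $x$, and identifying the resulting Gaussian quadratic form; but since it is textbook material I would simply invoke it. I would then match the blocks against \eqref{eq:zt} by taking $X=\Tilde{b}_t$ (a scalar) and $Y=\bold{o}_t$, so that $\mu_X=\hat{\mathbb{E}}\{\Tilde{b}_t^0\}$, $\mu_Y=\mu[\bold{o}_t^0]$, $\Sigma_{XX}=\beta_t$, $\Sigma_{XY}=\boldsymbol{\phi}_t^{\sf T}$, $\Sigma_{YX}=\boldsymbol{\phi}_t$, and $\Sigma_{YY}=\bold{K}_t$. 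Substituting gives conditional mean $\hat{\mathbb{E}}\{\Tilde{b}_t^0\}+\boldsymbol{\phi}_t^{\sf T}\bold{K}_t^{-1}(\bold{o}_t-\mu[\bold{o}_t^0])$ and conditional variance $\beta_t-\boldsymbol{\phi}_t^{\sf T}\bold{K}_t^{-1}\boldsymbol{\phi}_t$.

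The final step is cosmetic but is what brings the result into the stated form: I would introduce $\bold{a}_t$ as the solution of $\bold{K}_t\bold{a}_t=\boldsymbol{\phi}_t$, i.e.\ $\bold{a}_t=\bold{K}_t^{-1}\boldsymbol{\phi}_t$, and use the symmetry $\bold{K}_t=\bold{K}_t^{\sf T}$ to rewrite $\boldsymbol{\phi}_t^{\sf T}\bold{K}_t^{-1}=\bold{a}_t^{\sf T}$. This yields the variance $\psi(\bold{o}_t)=\beta_t-\bold{a}_t^{\sf T}\boldsymbol{\phi}_t$ exactly as in \eqref{eq:sub}, and the mean in the form $\hat{\mathbb{E}}\{\Tilde{b}_t^0\}+\bold{a}_t^{\sf T}(\bold{o}_t-\mu[\bold{o}_t^0])$; it also justifies the remark that $\bold{a}_t$ may be obtained by the conjugate gradient method without ever forming $\bold{K}_t^{-1}$. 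I would note that the mean agrees with \eqref{eq:sub} up to the sign of the correction term: the standard conditioning identity carries a $+$ sign, so since the variance matches exactly I would treat the $-$ in \eqref{eq:sub} as a sign-convention detail to be reconciled against the block ordering fixed in \eqref{eq:zt}.

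There is no real obstacle here, only one subtlety worth flagging: the conditioning identity and the linear system $\bold{K}_t\bold{a}_t=\boldsymbol{\phi}_t$ both require $\bold{K}_t$ to be nonsingular, whereas the radial basis kernel guarantees only that $\bold{K}_t=\kappa(\bold{o}_t^0,\bold{o}_t^0)$ is positive \emph{semi}-definite. I would close this gap by recalling that for distinct sample inputs the RBF Gram matrix is in fact strictly positive definite, or, as a fallback, by adding a small ridge (equivalently, a nugget/observation-noise term) to $\bold{K}_t$ — a standard device in Gaussian-process regression that leaves every expression above unchanged while ensuring invertibility and a well-defined $\bold{a}_t$.
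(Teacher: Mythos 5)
Your proof is correct and takes essentially the same route as the paper: the paper's own proof simply invokes the standard Gaussian conditioning identity (citing the same Gaussian-process reference) and then defines $\bold{a}_t$ via $\bold{K}_t \bold{a}_t = \boldsymbol{\phi}_t$, exactly as you do. The sign discrepancy you flag is real and is not a block-ordering convention: the paper's proof writes the conditional mean with the minus sign copied verbatim from the lemma statement, without any derivation, whereas the standard identity (and your derivation) gives $\hat{\mathbb{E}}\{\Tilde{b}_{t}^0\} + \boldsymbol{\phi}_t^{\sf T}\bold{K}_t^{-1}(\bold{o}_t - \mu[\bold{o}_t^0])$; since the radial basis kernel makes every entry of $\boldsymbol{\phi}_t$ strictly positive, no sign convention can absorb the difference, so the minus in the paper appears to be an error and your plus-sign version is the correct statement. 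Your closing caveat about the invertibility of $\bold{K}_t$ (the RBF Gram matrix is guaranteed only positive semi-definite) likewise identifies a point the paper passes over silently, and the ridge/nugget fix you propose is the standard remedy that leaves the lemma's formulas intact.
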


\begin{proof}
    Since $\bold{z}_t$ is Gaussian, the posterior distribution of $\Tilde{b}_t$ is also Gaussian. After some standards derivations, the mean and variance of this conditional distribution are given by \cite{do2007gaussian}
     \begin{subequations} \label{eq:sub}
\begin{align}
\zeta (\bold{o}_t) &= \hat{\mathbb{E}} \{ \Tilde{b}_{t}^0 \} - \boldsymbol{\phi}_t^{\sf T} \bold{K}_t^{-1}
(\bold{o}_t - \mu[\bold{o}_t^0] ), \\ 
\psi (\bold{o}_t) &= \beta_t - \boldsymbol{\phi}_t^{\sf T} \bold{K}_t^{-1} \boldsymbol{\phi}_t.
\end{align}
By defining $\bold{a}_t$ as in the lemma, the proof is concluded.
\end{subequations}
\end{proof}

Using this posterior distribution, we can compute various estimators for $\Tilde{b}_t$. Noting that this estimator is used to estimate Newton's direction, we compute a simple unbiased estimator by sampling from the posterior distribution:\footnote{From the Bayesian viewpoint, one might consider setting the estimator to the posterior mean, as it describes the minimum mean squared error (MMSE) estimate. Although MMSE returns minimal variance, it is in general biased and hence can lead to directional misalignment.} for every entry of $\Tilde{\bold{B}}_t$, we compute the posterior mean $\zeta (\bold{o}_t^0)$ and variance $\psi (\bold{o}_t^0)$ using the sample observation $\bold{o}_t^0$. Let $\hat{b}_{i,j,t}$ be the posterior sample for entry $(i,j)$. We build the Hessian estimator $\hat{\bold{B}}_t$ from entries $\hat{b}_{i,j,t}$ and compute 
\begin{align} \label{eq:noisydir}
\Tilde{\mathfrak{d}}_t= -\hat{\bold{B}}^{-1}_t  \Tilde{\bold{g}}_{t}. 
\end{align}
The global model is then updated as $\boldsymbol{\theta}_{t+1} = \boldsymbol{\theta}_{t} + \eta_t \Tilde{\mathfrak{d}}_t$.

Although validated numerically, it is insightful to explain why \eqref{eq:noisydir} provides a better estimate of the true Newton direction compared to deterministic approaches like BFGS. Classical deterministic methods rely only on the latest two gradient samples, which can result in highly inaccurate curvature estimates under noisy aggregation. In contrast, the proposed scheme uses information from the last $r+1$ gradient samples to estimate the loss curvature. This allows it to effectively mitigate the impact of aggregation noise and compute a more robust curvature estimate.

\begin{remark}
From Lemma~\ref{lem:posterior}, computing the posterior involves solving an inverse problem to find $\bold{a}_t$, which may add computational cost compared to deterministic approaches. However, this inverse problem can be efficiently solved with reduced complexity using the conjugate gradient algorithm \cite{hestenes1952methods}.
\end{remark}

\section{Algorithm and Convergence Analysis} \label{sec:algconv}
The derivations in Sections~\ref{sec:method} and \ref{sec:gaus} enable the realization of a second-order FL framework over the air, referred to as \textit{Gaussian Process FL (GP-FL)}. Compared to classical second-order FL, GP-FL introduces two key aspects:
\begin{inparaenum}
    \item[($i$)] it uses analog function computation to aggregate local models directly over the air, and
    \item[($ii$)] it invokes the Gaussian model to compute a more robust estimation for Hessian from a finite window of recent aggregated gradients.
\end{inparaenum}
The final algorithm is summarized in Algorithm~\ref{alg}, outlining the framework with device scheduling. At communication round $t$, the PS selects a set of participating clients $\mathcal{S}_t$ from the $K$ available ones using a scheduling scheme, specifically the GS method from \cite{9810113}. The PS shares $\boldsymbol{\theta}_t$ with the devices. Upon receiving the global model, the devices compute local gradients and transmit them over their uplink channels. The PS aggregates the global gradient over the air and estimates the Hessian matrix using the method from Section~\ref{sec:gaus}, which is then used to update the model via \eqref{eq:noisydir}.

\begin{remark}
We assume the PS has high computational capabilities, rendering the complexity of calculating a direction via \eqref{eq:noisydir} negligible. Consequently, the overall running times for GP-FL and conventional second-order FL are nearly identical, as confirmed by our numerical experiments.
\end{remark}

\begin{algorithm}[t]
\caption{GP-FL} \label{alg}
{\bfseries Input:} Number of global epochs $T$, global learning rate ${\eta}_t$, sindow size $r$, local datasets $\{\mathcal{D}_k\}_{k \in K}$.\\
\For{$t=0,1,\dots,T-1$}{
PS randomly selects a subset of devices $\mathcal{S}_t$ and sends $\boldsymbol{\theta}_{t}$ to them. \\
    \For{device $k \in \mathcal{S}_t$ in parallel}{
        Compute local gradient as $ \bold{g}_{t,k} = \frac{1}{|\mathcal{D}_k|} \sum_{(\bold{u}, v) \in \mathcal{D}_k}  \ell (\boldsymbol{\theta}_t, \bold{u}, v )$.\\
        Normalize $\bold{g}_{t,k}$ according to \eqref{eq:pre-proc}, and find $\bold{x}_{t,k}$ based on \eqref{eq:bs}. \\
        Send $\bold{x}_{t,k}$ to the PS through wireless channel.
    }

    PS calculates $\Tilde{\bold{g}}_{t}$ as per \eqref{eq:receveidnoisy}.\\
    PS finds the updating direction as per \eqref{eq:noisydir}. \\ 
    PS updates the global model as  $\boldsymbol{\theta}_{t+1} = \boldsymbol{\theta}_{t} + \eta_t \Tilde{\mathfrak{d}}_t$. }
\end{algorithm}

\subsection{Scheduling the Learning Rate} \label{sec:lr}
The remaining of this section provides convergence analysis for GP-FL under a set of regularity assumptions, which are commonly considered in the literature \cite{amiri2020federated}. We start the analysis by stating the first assumption.

\begin{assumption}[Smoothness and convexity]
\label{asmp:1}
The global loss function is twice continuously differentiable, $L$-Lipschitz gradient ($L$-smooth) and $\lambda$-strongly convex. As such, we have
\begin{align} \label{eq:ass1}
\lambda \bold{I} \preceq  \nabla^2 f(\boldsymbol{\theta})  \preceq L   \bold{I}.
\end{align}
The strong convexity of the global loss function implies that there exists a unique optimal model parameter, which we denote by $\boldsymbol{\theta}^{\star}$ hereafter in the paper.   
\end{assumption}

For global convergence of GP-FL, as a stochastic estimator of the Newton method, under Assumption~\ref{asmp:1} the magnitude of model update should be controlled. The classical approach for update control is to resort backtracking line search methods. In this work, we deviate from this classical scheme and control the update magnitude by scheduling the global learning rate. To this end, we invoke the results of \cite{polyak2020new}, which proposes an adaptive learning rate to facilitate the global convergence of Newton-type methods. Casting GP-FL on the proposed scheme in \cite{polyak2020new}, it is shown that for convergence guarantee of GP-FL to the minimizer of an $L$-smooth and $\lambda$-strongly convex loss, it is sufficient to schedule the learning rate as
\begin{align} \label{eq:eta}
\eta_t= \min \{ 1 , \frac{\lambda^2}{L \| \bold{g}_t\|}\}.    
\end{align}
We hence consider this scheduling throughout the analysis.

\subsection{Convergence Results}
To present the convergence guarantee for GP-FL, we first need to define the concept of  \textit{$\delta$-approximate} matrix. 
\begin{definition}[$\delta$-approximate]
Matrix $\hat{\bold{H}}$ is said to be a $\delta$-approximate of $\bold{H}$, if the following inequality holds
\begin{align} \label{def:lambda}
\| \hat{\bold{H}} - \bold{H} \| \leq \delta \| \bold{H} \|.     
\end{align}
\end{definition}




Using this definition, we can now present our first convergence result: Theorem~\ref{eq:thconv} gives an upper-bound on the gap between the converging and optimal models.

\begin{theorem} \label{eq:thconv}
Let Assumption 1 hold. Moreover, let the inverse of sample quasi-Hessian matrix drawn from the posterior in Lemma~\ref{lem:posterior} in round $t$ be a $\delta_t$-approximate of the inverse Hessian matrix, and define $\delta = \max_t \delta_t$. Then, the distance between the global model updated in round $t$, i.e., $\boldsymbol{\theta}_t$ and the optimal model $\boldsymbol{\theta}^\star$ is bounded from above as
\begin{align} \label{eq:th11}
&\mathbb{E} \big[ \|\boldsymbol{\theta}_{t}-\boldsymbol{\theta}^{\star} \|\big]\leq
\mu^t \mathbb{E} \big[ \big\| \boldsymbol{\theta}_{0}-\boldsymbol{\theta}^{\star} \big\| \big]  + C_t,
\end{align}
where $\mu = L\delta/\lambda$ and $C_t$ is given by
\begin{align} \label{eq:th11}
C_t = \frac{\mu^t-1}{\mu-1}
\begin{cases}
C_0 &t\leq t_0 \\
C_1 &t > t_0
\end{cases},
\end{align}
for $C_0$ and $C_1$ being
\begin{subequations} \label{eq:th_cond}
\begin{align}
C_0 &= 
\frac{\lambda}{L} (t_0 - t + \frac{2\gamma}{1-\gamma}) + \frac{\delta+1}{\lambda} \frac{\sigma_{n} \big\| \bold{c}_t \big\|}{|\mathcal{D}| \sqrt{\alpha^{\mathrm{ZF}}_t}} \\
C_1 &= 
\frac{2\lambda \gamma ^{2^{t-t_0}}}{L (1-\gamma ^{2^{t-t_0}})}+ \frac{\delta+1}{\lambda} \frac{\sigma_{n} \big\| \bold{c}_t \big\|}{|\mathcal{D}| \sqrt{\alpha^{\mathrm{ZF}}_t}} 
\end{align}
\end{subequations}
with $t_0 = \max \Bigl\{ 0 , \bigl \lceil \frac{2L}{\lambda^2 \| \bold{g}_0 \|} \bigr \rceil -2 \Bigr\}$
 and $\gamma = \frac{L}{2\lambda^2 } \| \bold{g}_0 \| - \frac{t_0}{4}$.
\end{theorem}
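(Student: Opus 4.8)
The plan is to cast GP-FL as a perturbed, damped Newton iteration and to combine the deterministic two-phase analysis of the adaptive step size from \cite{polyak2020new} with two perturbation terms: one accounting for the $\delta$-approximation of the inverse Hessian, and one accounting for the over-the-air aggregation noise $\tilde{\mathbf{n}}_t$. Writing $\mathbf{e}_t = \boldsymbol{\theta}_t - \boldsymbol{\theta}^{\star}$ and substituting the update $\boldsymbol{\theta}_{t+1} = \boldsymbol{\theta}_t - \eta_t \hat{\mathbf{B}}_t^{-1}\tilde{\mathbf{g}}_t$ together with the decomposition $\tilde{\mathbf{g}}_t = \mathbf{g}_t + \tilde{\mathbf{n}}_t$ from \eqref{eq:distortg}, I would first separate the noise-free descent from the stochastic perturbation:
\begin{align}
\mathbf{e}_{t+1} = \big( \mathbf{e}_t - \eta_t \hat{\mathbf{B}}_t^{-1} \mathbf{g}_t \big) - \eta_t \hat{\mathbf{B}}_t^{-1} \tilde{\mathbf{n}}_t . \nonumber
\end{align}
Since $\nabla f(\boldsymbol{\theta}^{\star}) = 0$, the mean-value identity $\mathbf{g}_t = \bar{\mathbf{H}}_t \mathbf{e}_t$ with $\bar{\mathbf{H}}_t = \int_0^1 \nabla^2 f(\boldsymbol{\theta}^{\star} + \tau \mathbf{e}_t)\, d\tau$ rewrites the deterministic part as $(\mathbf{I} - \eta_t \hat{\mathbf{B}}_t^{-1} \bar{\mathbf{H}}_t)\mathbf{e}_t$, which is the form to which a Newton-type contraction argument applies.

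I would then bound the two parts separately. For the noise, combining the $\delta$-approximate property of the posterior sample in Lemma~\ref{lem:posterior} with Assumption~\ref{asmp:1} gives $\| \hat{\mathbf{B}}_t^{-1} \| \le (1+\delta)\,\| (\nabla^2 f)^{-1} \| \le (1+\delta)/\lambda$; together with $\eta_t \le 1$ and the bound $\mathbb{E}[\|\tilde{\mathbf{n}}_t\|] \le \sigma_n \|\mathbf{c}_t\|/(|\mathcal{D}|\sqrt{\alpha^{\mathrm{ZF}}_t})$ inherited from the aggregation variance in Section~\ref{sec:receive}, this reproduces exactly the additive noise constant $\tfrac{\delta+1}{\lambda}\tfrac{\sigma_n \|\mathbf{c}_t\|}{|\mathcal{D}|\sqrt{\alpha^{\mathrm{ZF}}_t}}$ common to $C_0$ and $C_1$. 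For the deterministic part I would split $\hat{\mathbf{B}}_t^{-1} = (\nabla^2 f(\boldsymbol{\theta}_t))^{-1} + \mathbf{E}_t$ with $\|\mathbf{E}_t\| \le \delta/\lambda$, so that $\| \mathbf{E}_t \bar{\mathbf{H}}_t \| \le L\delta/\lambda = \mu$; this perturbation is the origin of the geometric contraction factor $\mu^t$, while the remaining exact-Newton term $\mathbf{I} - \eta_t (\nabla^2 f(\boldsymbol{\theta}_t))^{-1}\bar{\mathbf{H}}_t$ is precisely what the adaptive step-size analysis controls.

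To produce the curvature residuals in $C_0$ and $C_1$, I would invoke the deterministic guarantee of the schedule \eqref{eq:eta} from \cite{polyak2020new}, which splits the horizon at $t_0$: for $t \le t_0$ the step is damped ($\eta_t = \lambda^2/(L\|\mathbf{g}_t\|) < 1$) and the exact-Newton residual decays linearly, yielding the affine-in-$(t_0-t)$ curvature term of $C_0$; for $t > t_0$ the step is full ($\eta_t = 1$) and the iteration enters the quadratically converging regime, yielding the doubly-exponential term $\gamma^{2^{t-t_0}}$ of $C_1$, with $t_0$ and $\gamma$ pinned to the initial gradient magnitude $\|\mathbf{g}_0\|$ as stated. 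Collecting the contraction factor, the phase-dependent curvature residual, and the noise constant gives a per-step recursion $\mathbb{E}[\|\mathbf{e}_{t+1}\|] \le \mu\, \mathbb{E}[\|\mathbf{e}_t\|] + b_t$; unrolling from $\mathbf{e}_0$ yields $\mathbb{E}[\|\mathbf{e}_t\|] \le \mu^t \mathbb{E}[\|\mathbf{e}_0\|] + \sum_{i=0}^{t-1}\mu^{t-1-i} b_i$, and evaluating the geometric sum as $\tfrac{\mu^t - 1}{\mu-1}$ after bounding the $b_i$ by the relevant phase constant recovers the closed form in \eqref{eq:th11}.

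The step I expect to be the main obstacle is this last transplantation: faithfully carrying the two-phase deterministic Newton analysis of \cite{polyak2020new} onto the approximate-and-noisy iteration, in particular checking that the quadratic-convergence argument survives when the exact inverse Hessian is replaced by the $\delta$-approximate, noise-dependent matrix $\hat{\mathbf{B}}_t^{-1}$, and that the expectation commutes cleanly with the data-dependent random step size $\eta_t$ (a function of the realized $\|\mathbf{g}_t\|$). A secondary subtlety worth flagging is that the displayed bound is genuinely contractive only when $\mu = L\delta/\lambda < 1$, i.e.\ when the posterior-sampled Hessian is accurate enough.
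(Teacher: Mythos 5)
Your proposal is correct and follows essentially the same route as the paper's proof: the same three-way decomposition into the exact damped-Newton residual (bounded by the two-phase result of \cite{polyak2020new}), the Hessian-approximation error term bounded by $\mu\|\boldsymbol{\theta}_t-\boldsymbol{\theta}^\star\|$ via $\|\hat{\mathbf{B}}_t^{-1}-\mathbf{H}_t^{-1}\|\le\delta/\lambda$, and the noise term via $\|\hat{\mathbf{B}}_t^{-1}\|\le(\delta+1)/\lambda$ together with $\mathbb{E}[\|\tilde{\mathbf{n}}_t\|]\le\sigma_n\|\mathbf{c}_t\|/(|\mathcal{D}|\sqrt{\alpha^{\mathrm{ZF}}_t})$, followed by unrolling the resulting recursion. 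The only cosmetic difference is that you express $\mathbf{g}_t=\bar{\mathbf{H}}_t(\boldsymbol{\theta}_t-\boldsymbol{\theta}^\star)$ through the mean-value integral, whereas the paper reaches the identical bound directly from $L$-smoothness, $\|\mathbf{g}_t\|\le L\|\boldsymbol{\theta}_t-\boldsymbol{\theta}^\star\|$.
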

\begin{proof}
The proof is given in Appendix~\ref{app:proof}.    
\end{proof}
Theorem \eqref{eq:thconv} implies that the GP-FL algorithm exhibits a \textit{linear-quadratic} convergence rate. In fact, the quadratic term in \eqref{eq:th11} is exactly the same as the one reported in \cite{polyak2020new} for Newton-type methods. Nevertheless, our result has an extra linear term, as a result of model-based estimation of the Hessian matrix and the AirComp aggregation error. 

We next present Corollaries~\ref{corr:th1} and \ref{corr:th2}, which characterize the scenarios under which the GP-FL algorithm converges quadratically and linearly. 

\begin{corollary} \label{corr:th1}
Let the initial model be in a bounded distance of the optimal model as
\begin{align}
\mathbb{E} \big[ \big\| \boldsymbol{\theta}_{0}-\boldsymbol{\theta}^{\star} \big\| \big] <  \frac{\mu^t-1}{\mu^t(\mu-1)} C_1 .  
\end{align}
Then, the updated model in round $t$ satisfies 
\begin{align}
\mathbb{E} \big[ \|\boldsymbol{\theta}_{t}-\boldsymbol{\theta}^{\star} \|\big] \leq
\frac{2(\mu^t - 1)}{\mu - 1} C_1.
\end{align}
and lies within the $\varepsilon$-neighborhood of optimal model after $T_\varepsilon$ rounds, i.e., $\mathbb{E} \big[ \big\| \boldsymbol{\theta}_{T_{\varepsilon}}-\boldsymbol{\theta}^{\star} \big\| \big] \leq \varepsilon$, if $\mu<1$ and
\begin{align}
T_{\varepsilon} = \mathcal{O} \left( \log \log \frac{1}{\varepsilon}\right). 
\end{align}
which is also called super-linear convergence rate. 
\end{corollary}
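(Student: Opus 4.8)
The plan is to obtain the corollary directly from the master bound of Theorem~\ref{eq:thconv}, specialized to the regime $t>t_0$ in which $C_t=\frac{\mu^t-1}{\mu-1}C_1$. First I would rearrange the hypothesis: multiplying the assumed inequality $\mathbb{E}[\|\boldsymbol{\theta}_0-\boldsymbol{\theta}^\star\|]<\frac{\mu^t-1}{\mu^t(\mu-1)}C_1$ through by $\mu^t$ gives $\mu^t\,\mathbb{E}[\|\boldsymbol{\theta}_0-\boldsymbol{\theta}^\star\|]<\frac{\mu^t-1}{\mu-1}C_1$, so the initial-error contribution in the bound is dominated by a single copy of the accumulated term $C_t$. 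Substituting this into Theorem~\ref{eq:thconv} yields $\mathbb{E}[\|\boldsymbol{\theta}_t-\boldsymbol{\theta}^\star\|]<\frac{\mu^t-1}{\mu-1}C_1+\frac{\mu^t-1}{\mu-1}C_1=\frac{2(\mu^t-1)}{\mu-1}C_1$, which is exactly the claimed bound. This first step is a pure algebraic substitution, and it is where the peculiar shape of the hypothesis pays off: it is tuned precisely so that the transient error is absorbed into one copy of $C_t$.

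For the rate statement I would next invoke $\mu<1$, which makes $\frac{\mu^t-1}{\mu-1}=\frac{1-\mu^t}{1-\mu}$ a bounded increasing sequence, at most $\frac{1}{1-\mu}$ uniformly in $t$. Hence $\mathbb{E}[\|\boldsymbol{\theta}_t-\boldsymbol{\theta}^\star\|]\leq\frac{2}{1-\mu}C_1$, and the entire $t$-dependence of the rate is carried by $C_1$ in \eqref{eq:th_cond}. The decisive observation is that the optimization component of $C_1$ is $\frac{2\lambda}{L}\cdot\frac{\gamma^{2^{t-t_0}}}{1-\gamma^{2^{t-t_0}}}$, whose numerator $\gamma^{2^{t-t_0}}$ decays \emph{doubly} exponentially in $t$ whenever $0<\gamma<1$. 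Demanding that this dominant term be at most $\varepsilon$ and taking logarithms once gives $2^{t-t_0}\gtrsim\log(1/\varepsilon)/\log(1/\gamma)$; taking logarithms a second time gives $t-t_0\gtrsim\log_2\log(1/\varepsilon)$. Since $t_0$ and $\gamma$ are fixed problem constants independent of $\varepsilon$, this delivers $T_{\varepsilon}=\mathcal{O}(\log\log(1/\varepsilon))$, the advertised super-linear rate.

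The main obstacle I anticipate is the second, non-decaying summand of $C_1$, namely the AirComp noise floor $\frac{\delta+1}{\lambda}\frac{\sigma_{n}\|\bold{c}_t\|}{|\mathcal{D}|\sqrt{\alpha^{\mathrm{ZF}}_t}}$, which does not shrink with $t$ and therefore keeps the bound from dropping below a fixed residual level. To make the $\log\log(1/\varepsilon)$ claim rigorous, I would argue that the stated rate governs the transient phase in which the doubly-exponential term dominates, i.e. for target accuracies $\varepsilon$ that remain above this floor, equivalently treating the residual term as a fixed additive offset that fixes the ultimately attainable accuracy. I would also verify the side conditions quietly used above: that $t>t_0$ so the $C_1$ branch of the bound is active, that $0<\gamma<1$ so the doubly-exponential factor genuinely decays, and that $1-\gamma^{2^{t-t_0}}$ stays bounded away from zero for $t>t_0$ so the denominator in $C_1$ contributes only a constant factor. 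None of these is deep, but pinning down the interplay between the decaying curvature term and the fixed noise term is the only genuinely delicate point.
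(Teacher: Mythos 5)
Your proposal follows essentially the same route as the paper's own proof: multiply the hypothesis by $\mu^t$ so that the initial-error contribution is absorbed into one copy of $C_t$, yielding the bound $2C_t = \frac{2(\mu^t-1)}{\mu-1}C_1$, and then extract the $\mathcal{O}\left(\log\log\frac{1}{\varepsilon}\right)$ rate from the doubly-exponential decay of $\gamma^{2^{t-t_0}}$, using $\mu<1$ to keep the coefficient $\frac{\mu^t-1}{\mu-1}$ bounded and taking logarithms twice. If anything, you are more careful than the paper, which silently drops the non-decaying AirComp noise-floor term in $C_1$ before taking logarithms, whereas you explicitly note that the stated rate is meaningful only for target accuracies $\varepsilon$ above that residual floor.
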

\begin{proof}
    See Appendix~\ref{app:corproof}.
\end{proof}
Corollary~\ref{corr:th1} presents an intuitive result: when GP-FL is initiated in a \textit{close enough} vicinity of the optimal model, it can \textit{quadratically} converge to the optimal model. We next consider the case with linear convergence.

\begin{corollary} \label{corr:th2}
Let the initial model satisfies
\begin{align}
\mathbb{E} \big[ \big\| \boldsymbol{\theta}_{0}-\boldsymbol{\theta}^{\star} \big\| \big] \geq  \frac{\mu^t-1}{\mu^t(\mu-1)} C_1 .  
\end{align}
Then, the updated model in round $t$ satisfies
\begin{align}
\mathbb{E} \big[ \|\boldsymbol{\theta}_{t}-\boldsymbol{\theta}^{\star} \|\big] \leq 2 \mu^t 
\mathbb{E} \big[ \|\boldsymbol{\theta}_{0}-\boldsymbol{\theta}^{\star} \|\big].   
\end{align}
and lies within the $\varepsilon$-neighborhood of optimal model after $T_\varepsilon$ rounds, i.e., $\mathbb{E} \big[ \big\| \boldsymbol{\theta}_{T_{\varepsilon}}-\boldsymbol{\theta}^{\star} \big\| \big] \leq \varepsilon$, if $\mu<1$ and
\begin{align}
T_{\varepsilon} = \mathcal{O} \left( \frac{1}{\log(\frac{1}{\mu})}\log \frac{1}{\varepsilon}\right)  .  
\end{align}
\end{corollary}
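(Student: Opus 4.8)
The plan is to read both claims straight off the bound in Theorem~\ref{eq:thconv}, treating this corollary as the exact complement of Corollary~\ref{corr:th1}: there the additive constant dominated the geometric term, whereas here the geometric term dominates. First I would specialize to the linear phase $t > t_0$, so that the case distinction in $C_t$ selects $C_1$ and the Theorem's bound reads
\begin{align}
\mathbb{E}\big[\|\boldsymbol{\theta}_t - \boldsymbol{\theta}^\star\|\big] \leq \mu^t \, \mathbb{E}\big[\|\boldsymbol{\theta}_0 - \boldsymbol{\theta}^\star\|\big] + \frac{\mu^t - 1}{\mu - 1}\, C_1 . \nonumber
\end{align}
Writing $D_0 \triangleq \mathbb{E}\big[\|\boldsymbol{\theta}_0 - \boldsymbol{\theta}^\star\|\big]$ for brevity, the whole corollary is an elementary manipulation of this inequality under the stated hypothesis.

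For the first claim I would multiply the hypothesis $D_0 \geq \tfrac{\mu^t - 1}{\mu^t(\mu-1)}\, C_1$ by $\mu^t > 0$, which preserves the direction and yields $\mu^t D_0 \geq \tfrac{\mu^t - 1}{\mu - 1}\, C_1$. Substituting this into the displayed bound replaces the additive term by something no larger than $\mu^t D_0$, giving
\begin{align}
\mathbb{E}\big[\|\boldsymbol{\theta}_t - \boldsymbol{\theta}^\star\|\big] \leq \mu^t D_0 + \mu^t D_0 = 2\,\mu^t\, \mathbb{E}\big[\|\boldsymbol{\theta}_0 - \boldsymbol{\theta}^\star\|\big] , \nonumber
\end{align}
which is precisely the asserted inequality. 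Here $\mu<1$ is used only to keep the signs consistent, since then $\mu^t-1$ and $\mu-1$ are both negative and their ratio is positive.

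For the iteration count I would impose $2\mu^t D_0 \leq \varepsilon$ on the right-hand side, equivalently $\mu^t \leq \varepsilon/(2D_0)$. Taking logarithms and dividing by $\log\mu < 0$ (which flips the inequality because $\mu<1$) gives $t \geq \log(2D_0/\varepsilon)/\log(1/\mu)$, so the smallest admissible round is $T_\varepsilon = \big\lceil \log(2D_0/\varepsilon)/\log(1/\mu)\big\rceil$. Splitting $\log(2D_0/\varepsilon) = \log(2D_0) + \log(1/\varepsilon)$ and letting $\varepsilon \to 0$, the $\log(1/\varepsilon)$ contribution dominates, which delivers $T_\varepsilon = \mathcal{O}\!\left(\tfrac{1}{\log(1/\mu)}\log\tfrac{1}{\varepsilon}\right)$ and completes the proof.

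I expect the only genuine subtlety, rather than an obstacle, to be the correct interpretation of the statement as a per-round assertion: the hypothesis bound $\tfrac{\mu^t-1}{\mu^t(\mu-1)}\,C_1$ grows without bound as $t\to\infty$ when $\mu<1$, so for fixed finite $D_0$ it can hold only while $t$ stays below a threshold. This is exactly the intended reading --- the corollary characterizes the early, far-from-optimum \emph{linear} phase in which the geometric term $\mu^t D_0$ swamps the noise floor captured by $C_1$, and it is this phase whose duration scales as $\log(1/\varepsilon)/\log(1/\mu)$. Beyond that threshold one crosses over into the regime of Corollary~\ref{corr:th1}. Care is therefore needed to state the result for $t>t_0$ and to note that the rate describes the linear segment; the algebra itself is routine.
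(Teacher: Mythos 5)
Your proposal is correct and follows essentially the same route as the paper's own proof: invoke the bound of Theorem~\ref{eq:thconv} for $t>t_0$, use the hypothesis (after multiplying by $\mu^t$) to dominate the additive term $\tfrac{\mu^t-1}{\mu-1}C_1$ by $\mu^t\,\mathbb{E}\big[\|\boldsymbol{\theta}_0-\boldsymbol{\theta}^\star\|\big]$, and then solve $2\mu^{T_\varepsilon}\mathbb{E}\big[\|\boldsymbol{\theta}_0-\boldsymbol{\theta}^\star\|\big]\leq\varepsilon$ for $T_\varepsilon$. If anything, your write-up is slightly more careful than the paper's (which loosely states the hypothesis as $\mathbb{E}\big[\|\boldsymbol{\theta}_0-\boldsymbol{\theta}^\star\|\big]\geq C_t$ rather than $\geq C_t/\mu^t$), and your closing observation about the hypothesis only holding for $t$ below a threshold is a sensible clarification the paper omits.
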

\begin{proof}
   See Appendix~\ref{app:corproof2}.
\end{proof}

It is worth mentioning that in classical distributed methods, the number of rounds required to achieve a desired precision $\varepsilon$, follows a linear convergence rate. Specifically, for vanilla federated averaging $T_{\varepsilon} = \mathcal{O} \left( \frac{L}{\lambda} \log \frac{1}{\varepsilon} \right)$. This underscores the superiority of GP-FL in terms of convergence rate.     

The convergence analysis can be extended to characterize the optimality gap, i.e., the difference between $f (\boldsymbol{\theta}_{t})$ and the minimal loss. Theorem~\ref{th:lossconv} gives an upper bound on the optimality gap at round $t$ whose proof is differed to Appendix \ref{app:thf}.

\begin{theorem} \label{th:lossconv}
Consider the assumptions in Theorem~\ref{eq:thconv}. Then, the optimality gap in round $t$ is bounded from above as
\begin{align}
& \mathbb{E} \big[  f (\boldsymbol{\theta}_{t}) \big] - f (\boldsymbol{\theta}^{\star})  \leq \frac{L}{2} \left( {\mu^{2t}} \mathbb{E} \big[ \big\| \boldsymbol{\theta}_{0}-\boldsymbol{\theta}^{\star} \big\|^2 \big] + C'_t \right),
\label{eq:boundfin3}
\end{align}
where $C'_t$ is defined as 
    \begin{align}
       C'_t = \frac{\mu^{2t}-1}{\mu^2-1} \begin{cases}
 C^{\prime}_0 &t\leq t_0 \\
C^{\prime}_1 &t > t_0 
\end{cases}
    \end{align}
for $\mu$ that is defined in Theorem~\ref{eq:thconv}, and 
\begin{subequations} 
\begin{align}
&C^{\prime}_0= \frac{\lambda^2}{L^2} (t_0 - t + \frac{2\gamma}{1-\gamma})^2 + \left(\frac{\delta+1}{\lambda} \right)^2 \frac{\sigma^2_{n} \big\| \bold{c}_t \bold{c}^{\mathsf{H}}_t\big\|}{|\mathcal{D}|^2 \alpha^{\text{ZF}}_t}, \\
 &C^{\prime}_1= \left(\frac{2\lambda \gamma ^{2^{t-t_0}}}{L (1-\gamma ^{2^{t-t_0}})} \right)^2+  \left( \frac{\delta+1}{\lambda} \right)^2 \frac{\sigma^2_{n} \big\| \bold{c}_t \bold{c}^{\mathsf{H}}_t\big\|}{|\mathcal{D}|^2 \alpha^{\text{ZF}}_t}.
\end{align}
\end{subequations}

\end{theorem}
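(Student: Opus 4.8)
The plan is to derive the optimality-gap bound directly from the distance bound already established in Theorem~\ref{eq:thconv}, rather than starting a convergence argument from scratch. The key observation is that $L$-smoothness (Assumption~\ref{asmp:1}) gives the standard quadratic upper bound
\begin{align}
f(\boldsymbol{\theta}_t) - f(\boldsymbol{\theta}^\star) \leq \frac{L}{2} \|\boldsymbol{\theta}_t - \boldsymbol{\theta}^\star\|^2,
\end{align}
which holds because $\nabla f(\boldsymbol{\theta}^\star) = 0$ at the unique minimizer. So first I would invoke this inequality, then take expectations, reducing the problem to bounding $\mathbb{E}[\|\boldsymbol{\theta}_t - \boldsymbol{\theta}^\star\|^2]$.

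The heart of the argument is to convert the bound on $\mathbb{E}[\|\boldsymbol{\theta}_t - \boldsymbol{\theta}^\star\|]$ from Theorem~\ref{eq:thconv} into a bound on the second moment $\mathbb{E}[\|\boldsymbol{\theta}_t - \boldsymbol{\theta}^\star\|^2]$. The cleanest route is to retrace the recursion used in the proof of Theorem~\ref{eq:thconv} but squaring the error at each step instead of taking norms directly; the recursion contracts by $\mu$ per round and injects an additive error term (the deterministic curvature/line-search contribution plus the zero-mean AirComp noise). Squaring the one-step update turns the linear contraction factor $\mu^t$ into $\mu^{2t}$ and the additive constants into their squared counterparts. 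The noise term is especially clean to handle: since $\Tilde{\bold{n}}_t$ is zero-mean (see the Remark following \eqref{eq:noisedef}), the cross terms between the deterministic error and the noise vanish in expectation, and the variance contributes the term $(\tfrac{\delta+1}{\lambda})^2 \tfrac{\sigma_n^2 \|\bold{c}_t\bold{c}_t^{\mathsf{H}}\|}{|\mathcal{D}|^2 \alpha_t^{\mathrm{ZF}}}$, which is exactly the square of the noise term appearing in $C_0$ and $C_1$ from Theorem~\ref{eq:thconv}.

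Concretely, I would establish a squared analogue of the per-round recursion, then unroll it over $t$ rounds. The geometric sum $\sum_{s} \mu^{2s}$ produces the prefactor $\tfrac{\mu^{2t}-1}{\mu^2-1}$ that multiplies the case-dependent constant $C'_t$, mirroring the structure $\tfrac{\mu^t-1}{\mu-1}$ in the first-moment result. The case split at $t_0$ carries over verbatim: for $t \leq t_0$ the line-search regime produces the $\tfrac{\lambda^2}{L^2}(t_0 - t + \tfrac{2\gamma}{1-\gamma})^2$ term, while for $t > t_0$ the pure-Newton regime yields $\big(\tfrac{2\lambda\gamma^{2^{t-t_0}}}{L(1-\gamma^{2^{t-t_0}})}\big)^2$. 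Both are the squares of the corresponding terms in $C_0, C_1$, which is the signature of propagating the Theorem~\ref{eq:thconv} recursion through a squaring operation.

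The main obstacle I anticipate is the cross-term bookkeeping when squaring the recursion. Because the additive error at each round is itself a sum of a deterministic (curvature-approximation/line-search) component and a stochastic (channel-noise) component, squaring produces cross terms both \emph{within} a round and \emph{across} rounds through the telescoping. The noise cross terms vanish by the zero-mean property and independence of $\Tilde{\bold{n}}_t$ across rounds, but the deterministic-deterministic and deterministic-stochastic cross terms require care: one must verify that the bound on $\mathbb{E}[\|\boldsymbol{\theta}_t - \boldsymbol{\theta}^\star\|^2]$ does not pick up additional cross-contributions that would spoil the clean "square-of-the-first-moment-constants" form. The likely resolution is either to apply Jensen/Cauchy–Schwarz to control the cross terms by the product of the individual terms, or to argue that they are subsumed into the stated constants up to the displayed form; either way, confirming that the recursion closes with exactly the claimed $C'_0, C'_1$ is the delicate step, and I would devote most of the appendix to making that squared recursion rigorous.
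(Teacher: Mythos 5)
Your proposal follows essentially the same route as the paper's proof: invoke $L$-smoothness with $\nabla f(\boldsymbol{\theta}^\star)=0$ to reduce the optimality gap to $\frac{L}{2}\mathbb{E}\big[\|\boldsymbol{\theta}_t-\boldsymbol{\theta}^\star\|^2\big]$, then square the per-round recursion from Theorem~\ref{eq:thconv} (via the same $\mathcal{M}_1,\mathcal{M}_2$ decomposition) and unroll the geometric sum to obtain the $\frac{\mu^{2t}-1}{\mu^2-1}$ prefactor with the squared constants $C'_0, C'_1$. The cross-term bookkeeping you flag as the delicate step is real, but the paper itself simply drops those cross terms, writing $\mathbb{E}\big[\|\boldsymbol{\theta}_{t+1}-\boldsymbol{\theta}^\star\|^2\big]\leq \mathbb{E}\big[\mathcal{M}_1^2\big]+\mathbb{E}\big[\mathcal{M}_2^2\big]$ without justification, so your treatment is, if anything, more careful than the paper's on exactly that point.
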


\section{Simulation Results} \label{sec:exp}
We validate GP-FL by numerical experiments. 
Throughout the experiments, we elaborate the effectiveness of our proposed scheme by comparing it with several \textit{benchmarks}, namely the following first-order and second-order FL algorithms:
\paragraph*{AirComp-FedAvg (First-order)} Our first benchmark is AirComp-FedAvg proposed in \cite{yang2020federated}. The algorithm schedules devices using difference-of-convex programming.
\paragraph*{GIANT (Second-order)}  We consider GIANT algorithm \cite{wang2018giant} when realized directly over the air via AirComp. We note that GIANT requires an additional aggregation of local gradients, resulting in two communication rounds per iteration. The communication model for this gradient aggregation follows the same procedure as we discussed in Section \ref{sec:comm}.
\paragraph*{DANE (Second-order)} We deploy AirComp to realize the DANE algorithm proposed in \cite{shamir2014communication}. Similar to GIANT, this algorithm requires two aggregations per communication round. 
\paragraph*{Sec-Order (Second-order)} We consider the Sec-Order algorithm \cite{9810113}, where the clients locally find a Newton direction and send them to the PS via AirComp. Note that this approach loads more computations on local devices.
\paragraph*{BFGS} The PS uses BFGS to find quasi-Newton direction based on the AirComp-based noisy gradients.

We follow the simulation setup in \cite{10091800} and consider channels with 200 distinct noise standard deviation values, namely $\{0.005, 0.0100, \dots, 1\}$. For each value, we generate a corresponding channel. Then, based on the number of devices $ K $, we sample $ K $ values from this noise set. The server is equipped with $N = 5$ antennas. For a fair comparison, we apply the same receiver beamforming method described in Section \ref{sec:receive} and the same client selection method discussed in Section \ref{sec:select} across all benchmark methods, including GP-FL. For all experiments, we use the SGD optimizer on local devices with a learning rate of $ \eta = 0.1 $, momentum set to 0.9, and weight decay of 0.0005. The batch size is set to 64. The hyper-parameters of the benchmark methods are aligned with those reported in their respective original papers. Additionally, we set the observation window width in GP-FL to $ r = 20 $. Other hyper-parameters are specified for each experiment separately.

\begin{figure*}[t]
	\centering
     \subfloat[w8a]{\includegraphics[width=0.25\textwidth]{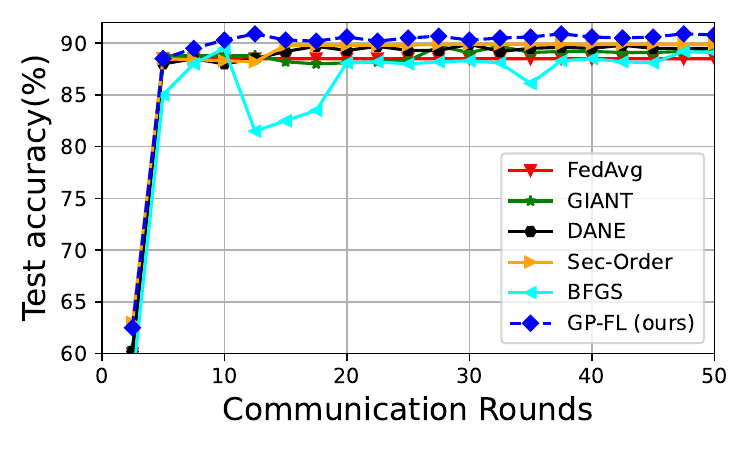}\label{fig:w8a}}
     \subfloat[a9a]{\includegraphics[width=0.25\textwidth]{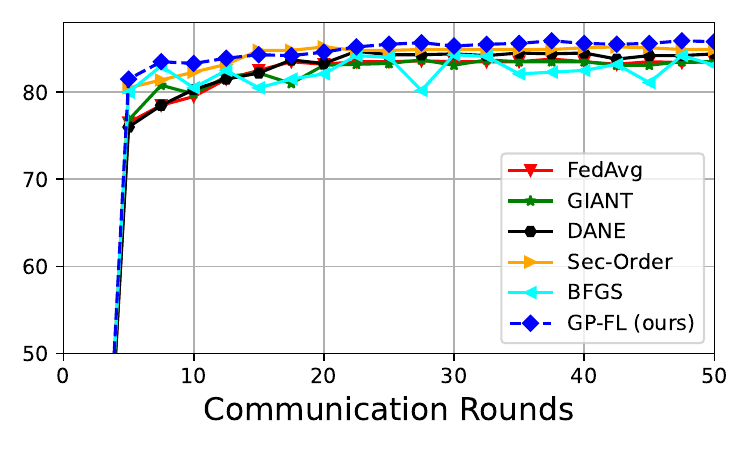}\label{fig:a9a}}
     \subfloat[phishing]{\includegraphics[width=0.25\textwidth]{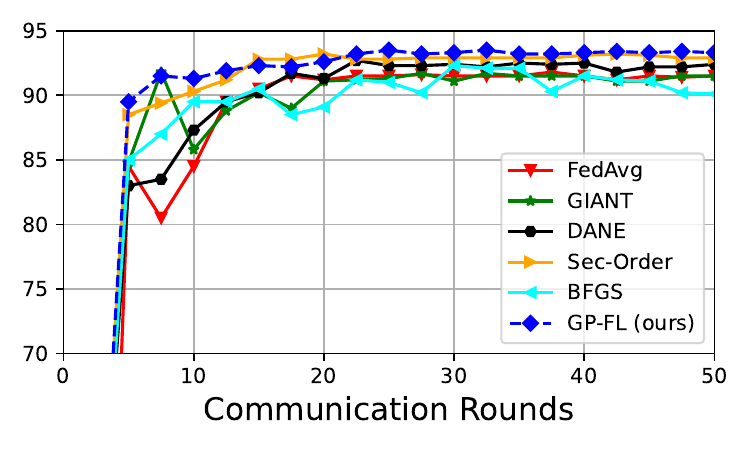}\label{fig:phishing}}
    \subfloat[Fashion-MNIST]{\includegraphics[width=0.25\textwidth]{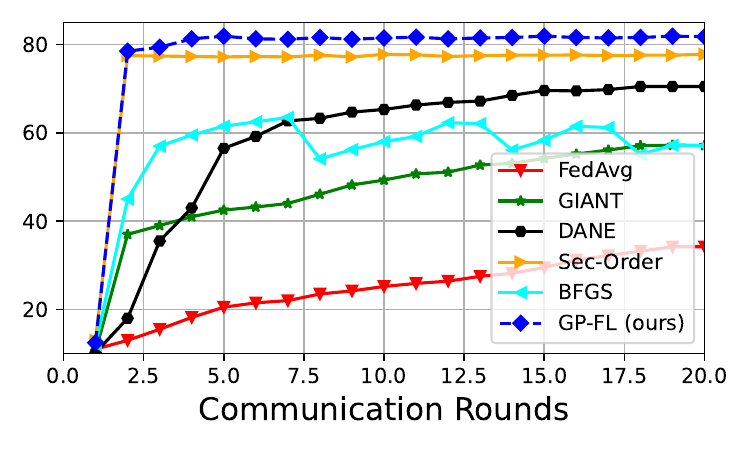}\label{fig:fmnist}}\\ \vskip -0.15in
	\subfloat[CIFAR-10, Setup I]{\includegraphics[width=0.25\textwidth]{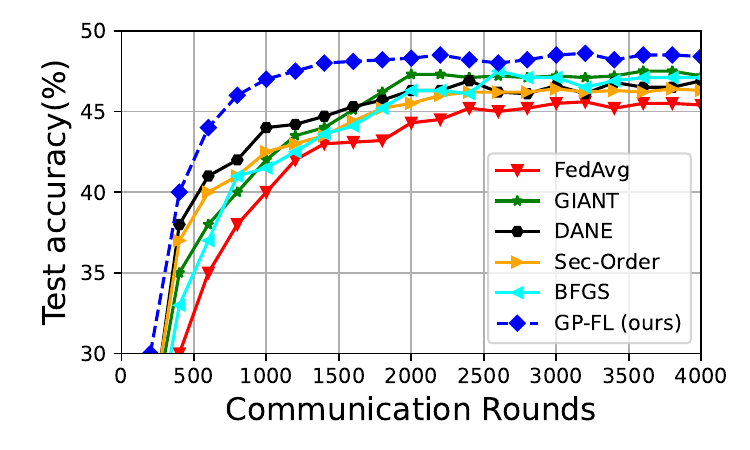}\label{fig:cifar10-1}}
	\subfloat[CIFAR-10, Setup II]{\includegraphics[width=0.25\textwidth]{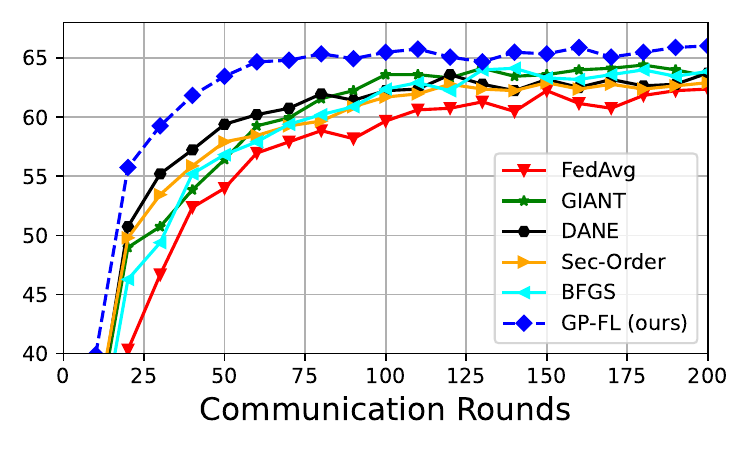}\label{fig:cifar10-2}} 
 \subfloat[CIFAR-100, Setup I]{\includegraphics[width=0.25\textwidth]{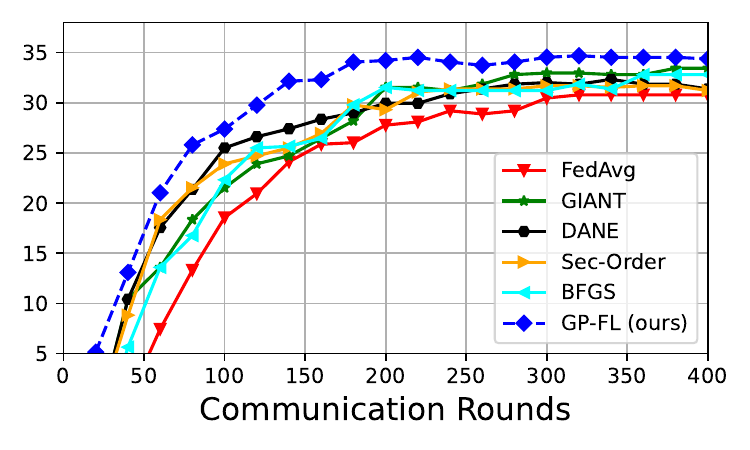}\label{fig:cifar100-1}}
	\subfloat[CIFAR-100, Setup II]{\includegraphics[width=0.25\textwidth]{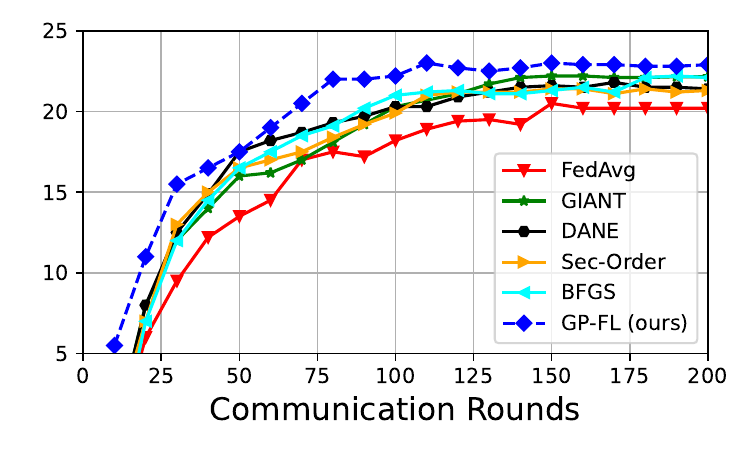}\label{fig:cifar100-2}} 
    \vskip -0.05in
    \caption{Test accuracy vs. communication rounds for GP-FL and baseline methods across different datasets.}
	\label{fig:improve}
	\vspace{-0.5cm}
\end{figure*}

\subsection{LIBSVM Dataset} \label{sec:LIBSVM}
First, we show results for a binary classification task using logistic regression and three datasets from the LIBSVM library \cite{chang2011libsvm}: a9a, w8a, and phishing. The data samples are uniformly distributed across $K = 20$ clients, all of whom participate in 50 communication rounds. We use the logistic regression model, a type of generalized linear model, as described in \cite{hosmer2013applied}. The results for a9a, w8a, and phishing datasets are depicted in \cref{fig:w8a,fig:a9a,fig:phishing}. As observed, GP-FL achieves higher classification accuracy than benchmark methods.

\subsection{Fashion-MNIST Dataset} \label{sec:LIBSVM}
We next use a fully-connected neural network with two hidden layers and perform 300 communication rounds, with all $ K = 10 $ clients participating in each round. The results are shown in \cref{fig:fmnist}. As observed, GP-FL not only achieves higher classification accuracy than the benchmark methods but also reaches 80\% accuracy within just five rounds.

\subsection{CIFAR-10 Dataset (Non-iid Settings)} \label{sec:CIFAR-10}
We evaluate CIFAR-10 classification in two setups:
\begin{itemize}
    \item \textit{Setup I}: Following \cite{hamidifair,10381881}, we organize the dataset by class and divide it into 200 shards. Each client randomly selects two shards without replacement. We use a feedforward neural network with two hidden layers for the FL task, with $ K=100 $ clients, 4000 communication rounds, and a 10\% client sampling rate per round.
    
    \item \textit{Setup II}: We distribute the dataset among them using Dirichlet allocation  with $\beta=0.5$. Using ResNet-18 with Group Normalization, we conduct 200 communication rounds, wherein all clients participate. We set $K=10$.
\end{itemize}

 The test accuracy curves for our method and the benchmark methods are depicted in \cref{fig:cifar10-1,fig:cifar10-2}. As illustrated, compared to the benchmark methods, GP-FL (i) achieves a higher test accuracy, and (ii) demonstrates a faster rate of convergence. For example, in \cref{fig:cifar10-1}, GP-FL reaches 45\% test accuracy in approximately 700 rounds, whereas the best-performing second-order benchmark methods require around 1600 rounds to reach the same accuracy.

\subsection{CIFAR-100 Dataset (Non-iid Settings)}\label{sec:cifar100}
CIFAR-100 shares the same sample size as CIFAR-10, yet it encompasses a broader diversity with 100 distinct classes. We next train the ResNet-18 model for this classification. Through training, we use Group Normalization and let all clients participate in each communication round. For this experiment, we consider the following two setups:
\begin{itemize}
    \item \textit{Setup I}: We set $K=10$ and $\beta=0.5$ for Dirichlet allocation, and use 400 communication rounds.
    \item \textit{Setup II}: We set $K=50$ and $\beta=0.05$ for Dirichlet allocation, and use 200 communication rounds.
\end{itemize}

The test accuracy curve for our method and the benchmark methods are depicted in \cref{fig:cifar100-1,fig:cifar100-2}. Similar conclusions can be drawn from these figures as those from the CIFAR-10 dataset. For example, in \cref{fig:cifar100-1}, GP-FL reaches 30\% test accuracy in just 120 rounds, whereas the best benchmark method requires 180 rounds to achieve the same performance.

\subsection{Comments on Required Observation Window} \label{sub:r}
From an implementation perspective, GP-FL requires the PS to select an efficient observation window $r$. We empirically examine how different $r$ values affect GP-FL's performance by repeating the experiment in Setup I with CIFAR-10 (see Subsection \ref{sec:CIFAR-10}) for $r = \{0, 5, 10, 15, 20, 50, 100\}$. Setting $r = 0$ corresponds to the standard quasi-Newton algorithm. The results, summarized in \cref{tab:r}, show no improvement beyond $r = 20$, with accuracy dropping at $r = 100$. This is because recent gradients carry the most relevant curvature information. Thus, the algorithm can be efficiently implemented with a reasonably sized observation window.

\vskip -0.1in
\begin{table}[!h] 
\caption{Impacts of hyper-parameter $r$: \normalfont the simulation setup is the same as Setup I with CIFAR-10.}
\vskip -0.2in
\label{tab:r}
\begin{center}
\resizebox{.95\columnwidth}{!}{%
\begin{tabular}{c|c|c|c|c|c|c|c}
\toprule
$r$ & 0 & 5 & 10 & 15 & 20 & 50 & 100\\ \hline

Average Accuracy (\%) & 45.24 & 46.75 & 47.88 & 48.41 & 48.59 & 48.61 & 48.55 \\
\bottomrule
\end{tabular}}
\end{center}
\vskip -0.2in
\end{table}

\section{Conclusion} \label{sec:conclusion}
While second-order methods are favored in FL for their fast convergence, the communication cost of sharing local Hessian matrices poses a challenge. Attempts to approximate these matrices with first-order information have limited effectiveness in real-world scenarios due to noisy updates from imperfect wireless channels. In this paper, we propose a novel second-order FL algorithm tailored for wireless channels, termed Gaussian process-based Hessian modeling for FL (GP-FL). The algorithm uses a non-parametric approach to estimate the global Hessian matrix from noisy local gradients received over uplink channels. Our analysis demonstrates that GP-FL achieves a linear-quadratic convergence rate. Extensive numerical experiments validate the proposed method's efficiency. Extending GP-FL to other distributed learning approaches is a natural direction in this area.

\appendices 

\section{Solving Optimization \eqref{opt1}} \label{app:DC}
To solve the joint optimization in \eqref{opt1}, we begin by transforming it into a low-rank optimization problem. Specifically, let $\bold{C} = \bold{c}_t \bold{c}_t^{\mathsf{H}}$, where $\text{rank}(\bold{C}) = 1$, and let $\bold{H}_k = \hat{\bold{h}}_{t,k} \hat{\bold{h}}_{t,k}^{\mathsf{H}}$. This allows us to reformulate \eqref{opt1} as 
\begin{align}
&\min_{\bold{C}}~ \text{Tr}(\bold{C})  \\ \nonumber
& \text{s.t.}~ \bold{C} \succeq \bold{0},~ \text{rank}(\bold{C})=1,~ \text{Tr}(\bold{C}\bold{H}_k) \geq |\mathcal{D}_k|^2, \quad \forall k \in \mathcal{S}_t.
\end{align}
Effectively addressing the rank-one constraint is crucial for solving low-rank optimization problems. A common approach is semidefinite relaxation (SDR), which removes the constraint, reformulating the problem as semidefinite programming to approximate the solution. However, for larger problems, the rank-one constraint is often violated, requiring randomization methods to scale the solution. 

An alternative approach to enforce the rank-one constraint is to express it as $\text{Tr} (\bold{C}) - \| \bold{C} \|_2 = 0$ with $\text{Tr} (\bold{C}) > 0$. This reformulates the problem into a difference-of-convex-functions (DC) program, allowing for a more accurate solution by satisfying all constraints. Building on methods in \cite{9810113}, we introduce the following modified DC programming, incorporating the new constraint as a penalty term:
\begin{align}
&\min_{\bold{C}} ~\text{Tr}(\bold{C}) + \zeta (\text{Tr} (\bold{C}) - \| \bold{C}\|_2)\\ \nonumber
& \text{s.t.}~ \bold{C} \succeq \bold{0},~ \text{Tr}(\bold{C})>0,~ \text{Tr}(\bold{C}\bold{H}_k) \geq |\mathcal{D}_k|^2, \quad \forall k \in \mathcal{S}_t, 
\end{align}
where $\zeta$ is a regularizer. Although this remains a non-convex problem due to the concave term $-\|\bold{C} \|_2$, we can linearize $\|\bold{C} \|_2$ and transform it into the following convex iterative subproblem:
\begin{align}
&\min_{\bold{C}} ~(1+\zeta)\text{Tr}(\bold{C}) - \zeta \partial \| \bold{C}_j\|_2 \cdot \bold{C}\\ \nonumber
& \text{s.t.}~ \bold{C} \succeq \bold{0},~ \text{Tr}(\bold{C})>0,~ \text{Tr}(\bold{C}\bold{H}_k) \geq |\mathcal{D}_k|^2, \quad \forall k \in \mathcal{S}_t, 
\end{align}
where $\bold{C}_j$ is $j$-th iterative of $\bold{C}$, $\partial \| \bold{C}_j\|_2$ is the sub-gradient of $\| \bold{C}_j\|_2$. For a complete description of the iterative algorithm, please refer to Algorithm 2 in \cite{9810113}.

\section{Proof of Theorem~\ref{eq:thconv}} \label{app:proof}
Before starting the derivation, note that we frequently use the triangle inequality, which states that for two vectors $\bold{v}$ and $\bold{u}$, $\|\bold{v} \pm \bold{u}\| \leq \|\bold{v}\| + \|\bold{u}\|$. For clarity, we will omit explicitly mentioning its use when it is evident from the context.

We aim to bound the distance to the optimal model $\boldsymbol{\theta}^{\star}$ at round $t$, i.e., $\|\boldsymbol{\theta}_t - \boldsymbol{\theta}^{\star}\|$. This bound is derived by establishing a recursive relationship between the distances in consecutive communication rounds. To begin, we note that
$\boldsymbol{\theta}_{t+1} = \boldsymbol{\theta}_{t} + \eta_t \Tilde{\mathfrak{d}}_t$, where $\Tilde{\mathfrak{d}}_t= -\hat{\bold{B}}^{-1}_t  \Tilde{\bold{g}}_{t}$ as given in \eqref{eq:noisydir}. We hence have
\begin{align}
\|\boldsymbol{\theta}_{t+1}-\boldsymbol{\theta}^{\star} \|   &= \big\| \boldsymbol{\theta}_{t} + \eta_t \Tilde{\mathfrak{d}}_t - \boldsymbol{\theta}^{\star} \big\| \leq \mathcal{M}_1 + \mathcal{M}_2,
\label{eq:app1}
\end{align}
where we define $\mathcal{M}_1 = \big\| \boldsymbol{\theta}_{t} -  \eta_t  \bold{H}^{-1}_t   \bold{g}_{t} - \boldsymbol{\theta}^{\star} \big\|$, and
\begin{align}
\mathcal{M}_2 &= \big\|  \bold{H}^{-1}_t   \bold{g}_{t} +\Tilde{\mathfrak{d}}_t \big\| = \big\|  \bold{H}^{-1}_t   \bold{g}_{t} - \hat{\bold{B}}^{-1}_t  \Tilde{\bold{g}}_{t} \big\|.
\label{eq:app2}
\end{align}

The term $\mathcal{M}_1$ is upper-bounded using the results reported in \cite{polyak2020new}. In particular, defining $t_0$ and $\gamma$ as in Theorem~\ref{eq:thconv}, the result of \cite{polyak2020new} implies that
\begin{align} \label{eq:polyak}
\mathcal{M}_1 \leq \frac{\lambda}{L}
\begin{cases}
t_0 - t + \dfrac{2\gamma}{1-\gamma}, ~ &t\leq t_0 \\
\dfrac{2 \gamma^{2^{t-t_0}}}{1-\gamma ^{2^{t-t_0}}}, &t > t_0
\end{cases}.
\end{align}

In the next step, we find an upper bound on $\mathcal{M}_2$. To this end, we use the triangular inequality once again to write
\begin{subequations}
\begin{align} \label{eq:M2}
\mathcal{M}_2 &\leq  
\big\|  \bold{H}^{-1}_t   \bold{g}_{t} - \hat{\bold{B}}^{-1}_t  \bold{g}_{t}  \big\| 
+ 
\big\| \hat{\bold{B}}^{-1}_t  \bold{g}_{t} - \hat{\bold{B}}^{-1}_t  \Tilde{\bold{g}}_{t}  \big\|  \\ \label{eq:M2_2}
& \leq   \big\| \bold{H}^{-1}_t - \hat{\bold{B}}^{-1}_t \big\| \big\| \bold{g}_{t} \big\|   + \big\|  \hat{\bold{B}}^{-1}_t \big\|   \big\| \bold{g}_{t} - \Tilde{\bold{g}}_{t} \big\|.
\end{align}
\end{subequations}
Noting that $\hat{\bold{B}}^{-1}_t$ is $\delta_t$-approximator of $\mathbf{H}_t^{-1}$, we can write
\begin{align}
\big\| \bold{H}^{-1}_t - \hat{\bold{B}}^{-1}_t \big\| \leq \delta_t \big\| \bold{H}^{-1}_t \big\| \leq \delta \big\| \bold{H}^{-1}_t \big\|, 
\end{align}
with the latter inequality being concluded directly from the fact that $\delta = \max_t \delta_t$. We further note that $\lambda$-strong convexity of the loss yields $\big\| \bold{H}^{-1}_t \big\| \leq \frac{1}{\lambda}$. We hence conclude that
\begin{align} \label{eq:difinv}
\big\| \bold{H}^{-1}_t - \hat{\bold{B}}^{-1}_t \big\| \leq \frac{\delta}{\lambda},
\end{align}
and use the $L$-smoothness of the global loss to write 
\begin{align} \label{eq:Lsmooth}
\big\| \bold{g}_{t} \big\| \leq L \big\| \boldsymbol{\theta}_{t}-\boldsymbol{\theta}^{\star} \big\|.    
\end{align}
Using \eqref{eq:difinv} and \eqref{eq:Lsmooth}, the first term in \eqref{eq:M2_2} is straightforwardly bounded. To bound the second term, we have
\begin{subequations}
\begin{align} \label{eq:binvlim}
\big\|  \hat{\bold{B}}^{-1}_t  \big\| & \leq \big\|  \hat{\bold{B}}^{-1}_t - \bold{H}^{-1}_t \big\| + \big\| \bold{H}^{-1}_t \big\| \\
&\leq \frac{\delta}{\lambda} + \frac{1}{\lambda} = \frac{\delta+1}{\lambda}.
\end{align}
\end{subequations}
Using this bound, we can finally write
\begin{align} \label{eq:M2bound}
\mathcal{M}_2 \leq \mu \big\| \boldsymbol{\theta}_{t}-\boldsymbol{\theta}^{\star} \big\| + \frac{\delta+1}{\lambda} \big\|  \bold{g}_{t} - \Tilde{\bold{g}}_{t} \big\|,
\end{align}
with $\mu$ being defined as in Theorem~\ref{eq:thconv}.

The above bound leads to an \textit{instantaneous} recursive bound. We now recall that in the AirComp-aided aggregation we have $\Tilde{\bold{g}}_{t} - \bold{g}_{t} = \Tilde{\bold{n}}_t$. By taking expectation from both sides of \eqref{eq:M2bound} w.r.t. to the randomness in communication links, we obtain 
\begin{align} \label{eq:Em2}
\mathbb{E} \big[\mathcal{M}_2 \big] \leq    \mu \mathbb{E} \big[ \big\| \boldsymbol{\theta}_{t}-\boldsymbol{\theta}^{\star} \big\| \big]  + \frac{\delta+1}{\lambda} \mathbb{E} \big[ \big\| \Tilde{\bold{n}}_t \big\| \big]. 
\end{align}
Using \eqref{eq:noisedef}, we can further write 
\begin{subequations}
\begin{align} \label{eq:noiseboumd}
\mathbb{E} \big[ \big\| \Tilde{\bold{n}}_t \big\| \big] &=\frac{1}{|\mathcal{D}| \sqrt{\alpha^{\text{ZF}}_t}}  \mathbb{E} \big[ \big\| \bold{n}_{t,j}^{\mathsf{H}} \bold{c}_t \big\| \big] \\
& \leq \frac{1}{|\mathcal{D}| \sqrt{\alpha^{\text{ZF}}_t}}  \big\| \bold{c}_t \big\|  \mathbb{E} \big[ \big\|  \bold{n}_{t,j} \big\| \big] = \frac{\sigma_{n} \big\| \bold{c}_t \big\|}{|\mathcal{D}| \sqrt{\alpha^{\text{ZF}}_t}}.
\end{align}
\end{subequations}
We finally take an expectation from both sides of \eqref{eq:app1}, and use \eqref{eq:Em2} and \eqref{eq:noiseboumd} to write (note that $\eta_t \leq 1$ as per \eqref{eq:eta})
\begin{align} \label{eq:boundfin1}
&\mathbb{E} \big[ \|\boldsymbol{\theta}_{t}-\boldsymbol{\theta}^{\star} \|\big]\leq
\mu \mathbb{E} \big[ \big\| \boldsymbol{\theta}_{t}-\boldsymbol{\theta}^{\star} \big\| \big]  + A_t,
\end{align}
where $A_t$ is defined as
\begin{align} \label{eq:th11}
A_t = 
\begin{cases}
C_0 &t\leq t_0 \\
C_1 &t > t_0
\end{cases}
\end{align}
for $C_0$ and $C_1$ that were defined in Theorem~\ref{eq:thconv}. Applying \eqref{eq:boundfin1} recursively, the proof is concluded. %

\section{Proof of Corollary \ref{corr:th1}} \label{app:corproof}
From Theorem \eqref{eq:thconv}, it is concluded that if $\mathbb{E} \big[ \big\| \boldsymbol{\theta}_{0}-\boldsymbol{\theta}^{\star} \big\| \big] <  C_t$
\begin{align}
\mathbb{E} \big[ \|\boldsymbol{\theta}_{t}-\boldsymbol{\theta}^{\star} \|\big] &\leq  2C_t .
\end{align}
We aim to determine the growth order of \(T_{\varepsilon}\). Assuming \(\varepsilon\) is sufficiently small, \(T_{\varepsilon}\) becomes large and satisfies \(T_{\varepsilon} > t_0\). To remain within the \(\varepsilon\)-neighborhood of \(\boldsymbol{\theta}^\star\), it is sufficient to have:
$2 C_{T_\varepsilon} \leq \varepsilon$,
which using the fact that $T_{\varepsilon}> t_0$, it reduces to
\begin{align}
 \label{eq:ineq1}
2\frac{\mu^{T_{\varepsilon}}-1}{\mu-1}  \Big[  {\frac{2\lambda \gamma ^{2^{{T_{\varepsilon}}-t_0}}}{L (1-\gamma ^{2^{{T_{\varepsilon}}-t_0}})}}
+  \frac{\delta+1}{\lambda} \frac{\sigma_{n} \big\| \bold{c}_{T_{\varepsilon}} \big\|}{|\mathcal{D}| \sqrt{\alpha^{\text{ZF}}_{T_{\varepsilon}}}} \Big] \leq \varepsilon .
\end{align}
Since $\mu < 1$, as ${T_{\varepsilon}}$ grows large $\mu^{T_{\varepsilon}} \to 0$; therefore, the left-hand-side coefficient tends to ${2}/({1-\mu})$. Noting that $\gamma \in [0,\frac{1}{2}]$, we can further replace the first term inside the brackets with ${2\lambda \gamma ^{2^{T_{\varepsilon}-t_0}}}/{L}$ for large $T_{\varepsilon}$.
By substituting into \eqref{eq:ineq1}, and taking $\log$ from both sides we finally have
\begin{align} \label{eq:t-t0}
\log(\frac{1}{\varepsilon}) \leq - \log(\frac{4\lambda}{L-\frac{L^2 \delta}{\lambda}}) -   2^{T_{\varepsilon}-t_0} \log(\gamma).  
\end{align}
As $\log(\gamma)<0$, the second term on the right-hand-side of \eqref{eq:t-t0} is positive. Noting that the first term does not scale, we can finally write $T_{\varepsilon} \leq \mathcal{O} \left( \log \log \frac{1}{\varepsilon}\right)$, which concludes the proof.

\section{Proof of Corollary \ref{corr:th2}} \label{app:corproof2}
The proof is similar to that of Corollary~\ref{corr:th1}: from Theorem~\ref{eq:thconv}, we know that if $\mathbb{E} \big[ \big\| \boldsymbol{\theta}_{0}-\boldsymbol{\theta}^{\star} \big\| \big] \geq  C_t$; then, we have
\begin{align}
\mathbb{E} \big[ \|\boldsymbol{\theta}_{t}-\boldsymbol{\theta}^{\star} \|\big] 
& \leq 2 \mu^t \mathbb{E} \big[ \big\| \boldsymbol{\theta}_{0}-\boldsymbol{\theta}^{\star} \big\| \big].
\end{align}
Thus, to lie within $\varepsilon$-neighborhood of optimal model, it is sufficient for $T_{\varepsilon} > t_0$ to satisfy $2 \mu^{T_{\varepsilon}} \mathbb{E} \big[ \big\| \boldsymbol{\theta}_{0}-\boldsymbol{\theta}^{\star} \big\| \big] \leq \varepsilon$,  which is equivalent to 
\begin{align}
T_{\varepsilon} \log(\frac{1}{\mu}) \geq \log \left( \frac{2 \mathbb{E} \big[ \big\| \boldsymbol{\theta}_{0}-\boldsymbol{\theta}^{\star} \big\| \big]}{\varepsilon} \right).
\end{align}
This concludes the proof.

\section{Proof of Theorem \ref{th:lossconv}} \label{app:thf}
As the global loss is $L$-Lipschitz gradient, we have
\begin{align} \label{eq:L-smooth}
f(\boldsymbol{\theta}_t) - f(\boldsymbol{\theta}^{\star}) - \nabla f^{\mathsf{T}}(\boldsymbol{\theta}^{\star}) (\boldsymbol{\theta}_t - \boldsymbol{\theta}^{\star}) \leq \frac{L}{2} \| \boldsymbol{\theta}_t- \boldsymbol{\theta}^{\star}\|^2. 
\end{align}
Taking expectation from both sides, and noting that $f(\boldsymbol{\theta}^{\star})=\boldsymbol{0}$, we obtain
\begin{align} \label{eq:L-smooth2}
\mathbb{E} \big[  f (\boldsymbol{\theta}_{t}) \big] - f (\boldsymbol{\theta}^{\star})  \leq  \frac{L}{2}    \mathbb{E} \big[ \| \boldsymbol{\theta}_t- \boldsymbol{\theta}^{\star}\|^2 \big].
\end{align}
This implies that to bound $\mathbb{E} \big[  f (\boldsymbol{\theta}_{t}) \big] - f (\boldsymbol{\theta}^{\star})$, we should bound $\mathbb{E} \big[ \| \boldsymbol{\theta}_t- \boldsymbol{\theta}^{\star}\|^2 \big]$. From \eqref{eq:app1}, we have  
\begin{align}
\mathbb{E} \big[ \| \boldsymbol{\theta}_t- \boldsymbol{\theta}^{\star}\|^2 \big] & \leq \mathbb{E} \big[\mathcal{M}_1^2\big] + \mathbb{E} \big[ \mathcal{M}_2^2\big].
\end{align}
Using the bounds in \eqref{eq:polyak} to \eqref{eq:noiseboumd}, we obtain
\begin{align} \label{eq:boundfin2}
\mathbb{E} \big[ \|\boldsymbol{\theta}_{t+1}-\boldsymbol{\theta}^{\star} \|^2\big]\leq \mu^2 \mathbb{E} \big[ \big\| \boldsymbol{\theta}_{t}-\boldsymbol{\theta}^{\star} \big\|^2 \big]  + C'_t,
\end{align}
where $C'_t$ is given by
\begin{align}
C'_t =\begin{cases}
 C^{\prime}_0 ~ &t\leq t_0 \\
 C^{\prime}_1 &t > t_0
\end{cases},
\end{align}
with $C^{\prime}_0$ and $C^{\prime}_1$ being defined in \cref{th:lossconv}. Using \eqref{eq:boundfin2} recursively, we next obtain
\begin{align}\label{eq:boundfin3}
\mathbb{E} \big[ \|\boldsymbol{\theta}_{t+1}-\boldsymbol{\theta}^{\star} \|^2 \big] &\leq \mu^{2t} \mathbb{E} \big[ \big\| \boldsymbol{\theta}_{0}-\boldsymbol{\theta}^{\star} \big\|^2 \big] \nonumber \\
&+ \frac{\mu^{2t}-1}{\mu^2-1}
\begin{cases}
 C^{\prime}_0, ~ &t\leq t_0 \\
 C^{\prime}_1, &t > t_0
\end{cases}
\end{align}
Finally, using \eqref{eq:boundfin3} in \eqref{eq:L-smooth2} the proof is concluded.

\bibliography{main}
\bibliographystyle{IEEEtran}

\end{document}